\def\bbx{\bar{\mathbf x}}
\def\bbg{\bar{\mathbf g}}
\def\bx{{\mathbf x}}
\def\xx{\mathbf{x}}
\def\yy{\mathbf{y}}
\def\cD{\mathcal{D}}
\def\R{\mathbb{R}}
\def\N{\mathbb{N}}
\def\E{\mathbb{E}}
\def\I{{\mathcal I}}
\def\O{\mathcal {O}}
\DeclareMathOperator*{\argmin}{arg\,min}
\newtheorem{assumption}{Assumption}
\newtheorem{definition}{Definition}
\newtheorem{lemma}{Lemma}
\newtheorem{theorem}{Theorem}
\newtheorem{remark}[lemma]{Remark}
\newtheorem{proposition}{Proposition}
\title{Faster Convergence of Local SGD for Over-Parameterized Models}
\author{\name Tiancheng Qin \email tq6@illinois.edu \\
      \addr Department of Industrial and Systems Engineering, Coordinated Science Laboratory\\
      University of Illinois at Urbana-Champaign
      \AND
      \name S. Rasoul Etesami \email etesami1@illinois.edu \\
      \addr Department of Industrial and Systems Engineering, Coordinated Science Laboratory\\
      University of Illinois at Urbana-Champaign
      \AND
      \name Ces\'ar A. Uribe \email cauribe@rice.edu\\
      \addr Department of Electrical and Computer Engineering \\
      Rice University}
\begin{document}

\maketitle

\begin{abstract}
Modern machine learning architectures are often highly expressive. They are usually over-parameterized and can interpolate the data by driving the empirical loss close to zero. We analyze the convergence of Local SGD (or FedAvg) for such over-parameterized models in the heterogeneous data setting and improve upon the existing literature by establishing the following convergence rates. For general convex loss functions, we establish an error bound of $\O(1/T)$ under a mild data similarity assumption and an error bound of $\O(K/T)$ otherwise, where $K$ is the number of local steps and $T$ is the total number of iterations. For non-convex loss functions we prove an error bound of $\O(K/T)$. These bounds improve upon the best previous bound of  $\O(1/\sqrt{nT})$ in both cases, where $n$ is the number of  nodes, when no assumption on the model being over-parameterized is made. We complete our results by providing problem instances in which our established convergence rates are tight to a constant factor with a reasonably small stepsize. Finally, we validate our theoretical results by performing large-scale numerical experiments that reveal the convergence behavior of Local SGD for practical over-parameterized deep learning models, in which the $\O(1/T)$ convergence rate of Local SGD is clearly shown.
\end{abstract}

\section{Introduction}

Distributed optimization methods have become increasingly popular in modern machine learning, owing to the data privacy/ownership issues and the scalability of learning models {concerning} massive datasets. The large datasets often make training the model and storing the data in a centralized way almost infeasible. That mandates the use of distributed optimization methods for training machine learning models. However, a critical challenge in distributed optimization is to reduce the communication cost among the local nodes, which has been reported as a major bottleneck in training many large-scale deep learning models \citep{zhang2017zipml,lin2017deep}.

One naive approach to tackling this challenge is using the Minibatch Stochastic Gradient Descent (SGD) algorithm, which generalizes SGD to the distributed optimization setting by averaging the stochastic gradient steps computed at each node (or client) to update the model on the central server. Minibatch SGD has been shown to perform well in a variety of applications, see, e.g., \citet{dekel2012optimal,cotter2011better}. Recently, Local SGD \citep{stich2018local,mangasarian1995parallel} (also known as Federated Averaging) has attracted significant attention as an appealing alternative to Minibatch SGD to reduce communication cost, where during a communication round, {several} local SGD iterations are performed at each node before the central server computes the average. 

Local SGD has been widely applied in Federated Learning~\citep{li2020federated}, and other large-scale optimization problems and has shown outstanding performance in both simulation results~\citep{mcmahan2017communication} as well as real-world applications such as keyboard prediction~\citep{hard2018federated}. At the same time, recent works have studied the theoretical convergence guarantees of Local SGD in various settings \citep{li2019convergence,koloskova2020unified,gorbunov2021local,qin2020communication,yang2021achieving}. Specifically, an $\O(\frac{1}{nT})$ convergence rate was shown for strongly convex loss functions \citep{karimireddy2020scaffold}, where $n$ is the number of nodes and $T$ is the total number of iterations. Moreover, an $\O(\frac{1}{\sqrt{nT}})$ convergence rate was shown for general convex loss functions  in \citet{khaled2020tighter}. In addition, an $\O(\frac{1}{\sqrt{nT}})$ convergence rate was shown for non-convex loss functions \citep{yu2019parallel,haddadpour2019convergence}. When the Polyak-Lojasiewicz condition is assumed, \citet{haddadpour2019convergence} showed $\O(\frac{1}{nT})$ convergence rate for non-convex loss functions and \citet{maralappanavar2022linear} showed $\O(\exp(-T/K^2))$ convergence rate, where $K$ is the number of local steps.\footnote{{The PL condition is a generalization of strong convexity and requires the loss function to exhibit quadratic growth, which is a very strong assumption.}} These works made substantial progress toward understanding the theoretical convergence properties of the Local SGD. Their results are for general models without the over-parameterization (or interpolation) assumption.

However, despite past efforts, the current results have shortcomings {in explaining} the faster convergence of Local SGD compared to Minibatch SGD, which is significant especially when training large-scale deep learning models \citep{mcmahan2017communication}. In \citet{woodworth2020local}, the authors give a lower bound on the performance of local SGD that is worse than the Minibatch SGD guarantee in the i.i.d. data setting (i.e., when all local loss functions are identical). The situation is even worse in the heterogeneous data setting (i.e., when local loss functions are different), which is the setting that we consider in this paper. Local SGD is shown to suffer from ``client drift'', resulting in unstable and slow convergence \citep{karimireddy2020scaffold}, and it is known that Minibatch SGD dominates all existing analyses of Local SGD. \citep{woodworth2020minibatch}.

On the other hand, a key observation for explaining the fast convergence of SGD in modern machine learning was made by \citet{ma2018power} that says modern machine learning architectures are often highly expressive and are over-parameterized. Based on both theoretical and empirical evidence \citep{zhang2021understanding, chaudhari2019entropy}, most or all local minima in such over-parametrized settings are also global. Therefore, the authors in \citet{ma2018power} assumed \emph{interpolation} of the data: {the empirical loss at every data point can be driven to zero}. Under such interpolation assumption, a faster convergence rate of SGD was proven \citep{ma2018power,vaswani2019fast}. {Furthermore, it was shown in \citet{ma2018power} that under certain conditions, a mini-batch size larger than some threshold $m^*$ is essentially helpless for SGD.} This is important since, in distributed optimization, it means: \emph{for Minibatch SGD, larger batch sizes will not speed up convergence, while for Local SGD, more local steps can potentially speed up convergence.} This provides a new direction for explaining the fast convergence of Local SGD for large-scale optimization problems as well as its faster convergence compared to Minibatch SGD.

Motivated by the above studies, in this paper, we formally study the theoretical convergence guarantees of Local SGD for training over-parameterized models in the heterogeneous data setting. Our results improve the existing literature and include the natural case of training large-scale deep learning models.

\subsection{Related Works}
{Adopting a Neural Tangent Kernel (NTK) framework of analysis, two recent works, \citet{huang2021fl,deng2022local} studied the convergence rate of Local SGD for specific over-parameterized Neural Networks and showed error bounds that are $\O(\exp(-T/K^2))$ and $\O(\exp(-T/K))$ respectively. However, both works focus on very restrictive and somewhat unrealistic types of Neural Networks. \citet{huang2021fl} only considered two-layer fully connected Neural Networks with ReLU activation, and they require the width of the Neural Network to be $\Omega(N^4)$, where $N$ denotes the total number of data samples in the training set\footnote{{This parameter $N$ is written as $n$ in the original work \citet{huang2021fl}.}}, which is not very realistic in practical applications. Likewise, \citet{deng2022local} considered fully connected Neural Networks with ReLU activation but with multiple layers, and they require the width of the Neural Network to be $\Omega(N^{16})$, which is not practical in large-scale problems. As a comparison, we give analysis under the over-parameterized regime for strongly convex, convex, and non-convex loss functions that include the natural case of training large-scale Neural Networks but is not limited to it, which is a much broader analysis.}

{The work \citet{li2022convergence} also studied the convergence of Local SGD for over-parameterized Neural Networks. Utilizing the \emph{no critical point} property of extra-wide Neural Networks shown in \citet{allen2019convergence}, they relaxed the commonly seen $L$-smoothness assumption of the local functions and proved the convergence of Local SGD but did not show an explicit convergence rate. Employing a new notion called \emph{iterate bias}, \citet{glasgow2022sharp} recently showed lower bounds for the convergence rate of Local SGD without the over-parameterized assumption that matches (or nearly matches) the existing upper bounds, showing that without the over-parameterized assumption, the existing upper bound analysis is not improvable.}
\subsection{Contributions and Organization}
Our main contributions can be summarized as follows:  
\begin{itemize}
	\item For general convex loss functions, we establish an error bound of $\O(1/T)$ under a mild data similarity assumption and an error bound of $\O(K/T)$, otherwise. Before our work, \citet{zhang2021distributed} showed the asymptotic convergence of Local Gradient Descent (GD) in this setting but did not provide an explicit convergence rate. To the best of our knowledge, the best convergence rate in this setting was $\O(1/\sqrt{nT})$ \citep{khaled2020tighter} which was achieved without assuming the model being over-parametreized.
	\item  For nonconvex loss functions, we prove an error bound of $\O(K/T)$. To the best of our knowledge, the best convergence rate in this setting was $\O(1/\sqrt{nT})$ \citep{koloskova2020unified} which was achieved without assuming the model being over-parametreized.
	\item We provide two problem instances to show that our convergence rates for the case of general convex and nonconvex functions are tight up to a constant factor under a reasonably small stepsize scheme. 
	\item we validate our theoretical results by performing large-scale numerical experiments that reveal the convergence behavior of Local SGD for practical over-parameterized deep learning models, in which the $\O(1/T)$ convergence rate of Local SGD is clearly shown.
\end{itemize}

\begin{table}[t]
	\centering
	{
	\caption{{Existing theoretical bounds for local SGD for heterogeneous data. GC and NC stand for general convex and non-convex, $n$ is the number of nodes, $T$ is the number of total iterations, and $K$ is the number of local steps ($R = T/K$ is the number of communication rounds).} }
	\label{table:1}
	\begin{tabular}{|c|c|c|c|c|}
		\hline
		Objective & Convergence Rate & Over-parameterzied & Extra  & References \\ 
		 & & &Assumption&\\\hline
		GC  & $\O(\frac{1}{\sqrt{nT}})$  & No  & /  & \citet{khaled2020tighter}  \\ \hline
		NC  & $\O(\frac{1}{\sqrt{nT}})$  & No  & / & \citet{koloskova2020unified}  \\ \hline
		NC  & $\O(\frac{1}{nT})$   & No  & PL condition  & \citet{haddadpour2019convergence}  \\ \hline
		NC  & $\O(\exp(-T/K^2))$   & Yes  & PL condition  & \citet{maralappanavar2022linear}  \\ \hline
		\textbf{GC}  & $\O(1/T)$  & \textbf{Yes}  & $c>0$ in \eqref{eq:localgap}  & \textbf{This work}  \\ \hline
		\textbf{GC}  & $\O(K/T)$  & \textbf{Yes}  & /  & \textbf{This work}  \\ \hline
		\textbf{NC}  & $\O(K/T)$  & \textbf{Yes}  & /  & \textbf{This work}  \\ \hline
	\end{tabular}
}
\end{table}

In fact, by establishing the above error bounds, we {partially} prove the effectiveness of local steps in speeding up the convergence of Local SGD, thus partially explaining the fast convergence of Local SGD (especially when compared to Minibatch SGD) when training large-scale deep learning models. 

{Our analysis builds upon the techniques used in \citet{ma2018power} and \citet{vaswani2019fast} for analyzing centralized SGD in the over-parameterized setting and applies them to analyze both the local descent progress and the global descent progress in Local SGD. Specifically, we adopt new techniques in the proof of Theorem \ref{theo:convex} that directly relate the local progress with the global progress instead of measuring the progress made by $\bbx^t$, where we made use of the consensus error, i.e., $\frac{1}{n}\sum_{i=1}^n\E\|\xx_i^t\!-\!\bbx^t\|^2$ to \emph{improve} convergence, which is in contrast to prior works. The new technique allows us to use a constant stepsize that does not scale with $\frac{1}{K}$ and to establish better bounds. In the proof of Theorem \ref{theo:non_convex}, we use the techniques in \citet{ma2018power} and \citet{vaswani2019fast} to bound both the global descent progress and the consensus error of Local SGD. These techniques may be of independent interest to the readers.} 

In Section \ref{sec:setup}, we formally introduce the problem. In Section \ref{sec:main}, we state our main convergence results for general convex and non-convex local functions. We also provide a lower bound to show the tightness of our convergence rate bounds for reasonably small step sizes. We justify our theoretical bounds through extensive numerical results in Section \ref{sec:simulation}. Conclusions are given in Section \ref{sec:conclusion}. We defer all the proofs to Section \ref{sec:proofs}.


\section{Problem Formulation}
\label{sec:setup}

We consider the problem of $n$ nodes $[n]=\{1,2,\ldots,n\}$ that collaboratively want to learn an over-parameterized model with decentralized data as the following distributed stochastic optimization problem:
\begin{align}
	\min_{\xx \in \R^d} \  f(\xx):=\frac{1}{n} \sum_{i=1}^n f_i(\xx), \  \label{eq:f}
\end{align} 
where the function $f_i(\xx)\triangleq \E_{\xi_i \sim \cD_i} f_i(\xx,\xi_i)$ denotes the local loss function, $\xi_i$ is a stochastic sample that node $i$ has access to, and $\cD_i$ denotes the local data distribution over the sample space~$\Omega_i$ of node $i$.

\begin{assumption}[Bounded below, $L$-smooth, unbiased gradient]
	\label{assum:general}
	We assume $f(\xx)$ is bounded below by $f^\star$ (i.e., a global minimum exists), $f_i(\xx,\xi_i)$ is $L$-smooth for every $i\in [n]$, and $\nabla f_i(\xx,\xi_i)$ is an unbiased stochastic gradient of $f_i(\xx)$.
\end{assumption}
 Moreover, for some of our results, we will require functions $f_i(\xx,\xi_i)$ to be $\mu$-strongly convex with respect to the parameter $\xx$ as defined next.
\begin{assumption}[$\mu$-strong convexity]\label{assum:convex}
	There exists a constant $\mu \ge 0$, such that for any $ \xx,\yy \in \R^d, i\in[n]$, and $\xi_i\in\Omega_i$, we have
	\begin{align}\label{eq:convexity}
		f_i(\xx,\xi_i)\ge f_i(\yy,\xi_i)+\langle \nabla f_i(\yy,\xi_i),\xx-\yy\rangle+\frac{\mu}{2}\|\xx-\yy\|^2.
	\end{align}
	If $\mu=0$, we simply say that each $f_i$ is convex.
\end{assumption}

The over-parameterized setting, i.e., when the model can \emph{interpolate} the data completely such that the loss at every data point is minimized simultaneously (usually means zero empirical loss), can be characterized by the following two assumptions \citep{ma2018power,vaswani2019fast}:
\begin{assumption}[Interpolation]\label{assum:interpolation}
	Let $\xx^\star \in \argmin_{x\in \R^d } f(\xx)$. Then, $\nabla f_i(\xx^\star,\xi_i)=0, \ \forall i\in [n],\ \xi_i\in \Omega_i$.
\end{assumption}

\begin{assumption}[Strong Growth Condition (SGC)]\label{assum:SGC}
	There exists constant $\rho$ such that $\forall \xx \in \R^d,\ i\in [n]$,
	\begin{align}\label{eq:SGC}
		\E_{\xi_i \sim \cD_i}\|\nabla f_i(\xx,\xi_i)\|^2 \le \rho \|\nabla f(\xx)\|^2.
	\end{align}
\end{assumption}
Notice that for the functions to satisfy SGC, local gradients at every data point must all be zero at the optimum $x^\star$. Thus, SGC is a stronger assumption than interpolation, which means Assumption 3 implies Assumption 2.

The SGC assumption can be viewed as an adaptation of a mild assumption, \emph{Strong Growth with noise}, i.e., 
\begin{align}
	\label{eq:SGN}
	\E_{\xi_i \sim \cD_i}\|\nabla f_i(\xx,\xi_i)\|^2 \le \rho \|\nabla f(\xx)\|^2 +\sigma^2,
\end{align}
to the over-parameterized/interpolation setting, which implies that the gradient with respect to each point converges to zero at the optimum, suggesting that $\sigma = 0$ in \eqref{eq:SGN}. The Strong Growth with noise assumption is a generalization of the \emph{Bounded Variance} assumption commonly used in the stochastic approximation setting, i.e., 
$$\E_{\xi_i \sim \cD_i}\|\nabla f_i(\xx,\xi_i)\|^2 \le \|\nabla f(\xx)\|^2 +\sigma^2.$$ 

{The work \citet{vaswani2019fast} discusses functions satisfying { Assumption~\ref{assum:SGC} (SGC)} and shows that for linearly separable data, the squared hinge loss satisfies the assumption. In addition to that, we perform experimental verification in Appendix \ref{sec:verification} using the same problem setup as in Section \ref{subsec:cifar} (training over-parameterized ResNet18 Neural Network on the Cifar10 dataset) and show that Assumption \ref{assum:SGC} (SGC) is indeed a valid assumption for over-parameterized models in practice.}

When the local loss functions are convex, we define the following quantity $c\in [0,1]$ that allows us to measure the dissimilarity among them.

\begin{definition}\label{def:1}
	Let Assumption \ref{assum:general}, Assumption \ref{assum:convex} and Assumption \ref{assum:interpolation} {(Interpolation)} hold with $\mu \ge 0$. Let $\bx^* \in \argmin_{x\in \R^d } f(\bx)$. We define $c$ as the largest real number such that for all $\bx_1,\ldots\bx_n \in \R^d$ and $\bbx\coloneqq \frac{1}{n}\sum_{i = 1}^n\bx_i$, we have
	\begin{align}\label{eq:localgap}
		\frac{1}{n}\sum_{i = 1}^n(f_i(\bx_i)-f_i(\bx^*))\ge c(f(\bbx)-f(\bx^*)).
	\end{align}
\end{definition}

If Assumption \ref{assum:convex} and Assumption \ref{assum:interpolation} {(Interpolation)} hold, the left hand side of \eqref{eq:localgap} is always non-negative, which implies $c\ge0$. In particular, by taking $\bx_1=\ldots=\bx_n$ we have $c\le1$.  Moreover, as the local loss functions become more similar, $c$ will become closer to $1$. In particular, in the case of homogeneous local loss functions, i.e., $f_i=f \ \forall i$, using Jensen's inequality we have $c=1$.

In the next section, we will proceed to establish our main convergence rate results for various settings of strongly convex, convex, and nonconvex local functions.

\section{Convergence of Local SGD}
\label{sec:main}

This section reviews Local SGD and then analyzes its convergence rate under the over-parameterized setting.

In Local SGD, each node performs local gradient steps, and after every $K$ steps, sends the latest model to the central server. The server then computes the average of all {nodes'} parameters and broadcasts the averaged model to all nodes. Let $T$ be the total number of iterations in the algorithm. There is a set of communication times $\I =\{0,K,2K,\ldots ,T=RK\}$\footnote{To simplify the analysis, we assume without loss of generality that $T$ is divisible by $K$, i.e., $T=RK$ for some $R \in \N$.}, and in every iteration~$t$, Local SGD does the following: i) each node performs stochastic gradient updates locally based on $\nabla f_i(\xx,\xi_i)$, which is an unbiased estimation of $\nabla f_i(\xx)$, and ii) if $t$ is a communication time, i.e., $t\in \I$, it sends the current model to the central server and receives the average of all nodes' models. The pseudo-code for the Local SGD algorithm is provided in Algorithm \ref{alg:1}.

\begin{algorithm}[t]
	\caption{Local SGD}\label{alg:1}
	\begin{algorithmic}[1]
		\State {\bf Input:} $\bx_i^{(0)} = \bx^{(0)}$ for $i \in [n]$, number of iterations $T$, the stepsize $\eta$, the set of communication times $\I$.
		\For{$t=0,\ldots,T-1$}
		\For{$i=1,\ldots,n$}
		\State Sample $\xi_i^{(t)}$, compute {$\nabla f_i(\xx_i^{(t)}\!, \xi_i^{(t)})$}
		\State $\xx_i^{(t+\frac{1}{2})} = \xx_i^{(t)} - \eta \nabla f_i(\xx_i^{(t)}\!, \xi_i^{(t)})$
		\If{$t+1\in \I$}
		\State {$\xx_i^{(t+1)} = \frac{1}{n}\sum_{j = 1}^n\xx_j^{(t+\frac{1}{2})}$}
		\Else
		\State $\xx_i^{(t+1)} = \xx_i^{(t+\frac{1}{2})}$
		\EndIf
		\EndFor
		\EndFor
	\end{algorithmic}
\end{algorithm}

\subsection{Convergence Rate Analysis}
\label{sec:convergerate}
We now state our main result on the convergence rate of Local SGD under over-parameterized settings for general convex functions.

\begin{theorem}[General convex functions]\label{theo:convex}
	{Let Assumption \ref{assum:general}, Assumption \ref{assum:convex} and Assumption \ref{assum:interpolation} {(Interpolation)} hold with $\mu =0$, and let $c$ be defined as in Definition \ref{def:1}.  Moreover, let 
		\begin{align}\nonumber
			w_t=\begin{cases} 1 & \ \mbox{if}\ t\in \I \  \mbox{or} \ t+1\in \I,\\
				c & \mbox{otherwise},
			\end{cases}
		\end{align}
		and define $W = \sum_{t = 0}^{T-1}w_t$ and $\hat{\bx}^T\triangleq \frac{1}{W}\sum_{i = 0}^{T-1}w_t\bbx^{(t)}$. If we follow Algorithm \ref{alg:1} with stepsize $\eta \le\frac{1}{2L}$ and $K\ge 2$, then
		\begin{align*}
			\E [f(\hat{\bx}^T)-f^*]\le \frac{K\|\bx^{(0)} -\bx^*\|^2}{\eta(cKT+2(1-c)T)}.
		\end{align*}
		As a special case, if we choose $\eta = \frac{1}{2L}$,  we have
		\begin{align}\label{eq:rateconvex}
			\E [f(\hat{\bx}^T)-f^*]\le \frac{2KL\|\bx^{(0)} -\bx^*\|^2}{cKT+2(1-c)T}.
		\end{align}
	}
\end{theorem}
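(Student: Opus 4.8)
The plan is to run the whole analysis on the \emph{averaged squared distance} potential $\Psi^{(t)} := \frac1n\sum_{i=1}^n \E\|\bx_i^{(t)} - \bx^*\|^2$ rather than on $\E\|\bbx^{(t)}-\bx^*\|^2$ directly, because $\Psi^{(t)}$ automatically absorbs the client drift through the bias--variance split $\frac1n\sum_i\|\bx_i^{(t)}-\bx^*\|^2 = \|\bbx^{(t)}-\bx^*\|^2 + \frac1n\sum_i\|\bx_i^{(t)}-\bbx^{(t)}\|^2$. Two consequences of the hypotheses will do the heavy lifting. First, a \emph{self-bounding} estimate: since each realization $f_i(\cdot,\xi_i)$ is $L$-smooth and convex and, by interpolation (Assumption~\ref{assum:interpolation}), has $\bx^*$ as a minimizer, one gets $\E_{\xi_i}\|\nabla f_i(\bx,\xi_i)\|^2 \le 2L(f_i(\bx)-f_i(\bx^*))$; note this uses only interpolation, not the SGC. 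Second, \emph{variance reduction under averaging}: replacing the local iterates by their mean never increases $\frac1n\sum_i\|\cdot-\bx^*\|^2$, so a communication step can only decrease $\Psi$.

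Writing $\delta_i^{(t)} := f_i(\bx_i^{(t)}) - f_i(\bx^*) \ge 0$ and expanding one local step $\bx_i^{(t+\frac12)} = \bx_i^{(t)} - \eta\nabla f_i(\bx_i^{(t)},\xi_i^{(t)})$, I would combine convexity $\langle \nabla f_i(\bx_i^{(t)}), \bx_i^{(t)}-\bx^*\rangle \ge \delta_i^{(t)}$ with the self-bounding bound, and use $\eta = 1/(2L)$ (so that $2L\eta^2=\eta$) to obtain the clean per-node recursion $\Psi^{(t+1)} \le \Psi^{(t)} - \frac{\eta}{n}\sum_i\E[\delta_i^{(t)}]$, valid at every step (with an extra nonpositive drift term at communication steps, via variance reduction). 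It then remains to convert $\frac1n\sum_i\delta_i^{(t)}$ into $w_t\,\E[f(\bbx^{(t)})-f^*]$. At a synchronization time $t\in\I$ all local iterates equal $\bbx^{(t)}$, so $\frac1n\sum_i\delta_i^{(t)} = f(\bbx^{(t)})-f^*$ exactly, matching $w_t=1$; at an interior step Definition~\ref{def:1} gives $\frac1n\sum_i\delta_i^{(t)} \ge c(f(\bbx^{(t)})-f^*)$, matching $w_t=c$.

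\textbf{The main obstacle} is the step immediately \emph{before} a communication round ($t+1\in\I$, $t\notin\I$), where the prescribed weight is $w_t=1$ but the iterates are not yet synchronized, so Definition~\ref{def:1} only supplies the weaker factor $c$. The resolution I have in mind is to produce a \emph{second} one-step estimate at such steps, obtained by expanding $\E\|\bbx^{(t+1)}-\bx^*\|^2 = \E\|\bbx^{(t)}-\bx^*\|^2 - 2\eta\langle \tfrac1n\sum_i\nabla f_i(\bx_i^{(t)}),\, \bbx^{(t)}-\bx^*\rangle + \eta^2\E\|\tfrac1n\sum_i\nabla f_i(\bx_i^{(t)},\xi_i^{(t)})\|^2$ around the \emph{averaged} iterate and lower-bounding the inner product by $f(\bbx^{(t)})-f^* - \frac{L}{2}(\text{drift})$ using $L$-smoothness. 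Crucially, in this second estimate the quantity $\frac1n\sum_i\delta_i^{(t)}$ appears with the \emph{opposite sign} to the one it carries in the per-node estimate. Averaging the two estimates cancels $\frac1n\sum_i\delta_i^{(t)}$ entirely and, again invoking variance reduction to discard the residual nonnegative drift terms, leaves precisely $\Psi^{(t+1)} \le \Psi^{(t)} - \eta\,\E[f(\bbx^{(t)})-f^*]$, i.e.\ the $w_t=1$ case. This cancellation trick is the delicate point of the argument.

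Having established $\Psi^{(t+1)} \le \Psi^{(t)} - \eta w_t\,\E[f(\bbx^{(t)})-f^*]$ uniformly in $t$, I would telescope from $0$ to $T-1$, using $\Psi^{(0)} = \|\bx^{(0)}-\bx^*\|^2$ and $\Psi^{(T)}\ge 0$, to get $\eta\sum_{t=0}^{T-1} w_t\,\E[f(\bbx^{(t)})-f^*] \le \|\bx^{(0)}-\bx^*\|^2$. Jensen's inequality applied to the convex $f$ and the weighted average $\hat\bx^T = \frac1W\sum_t w_t\bbx^{(t)}$ converts the left-hand side into $\eta W\,\E[f(\hat\bx^T)-f^*]$. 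Finally I would evaluate $W$ exactly: each of the $R=T/K$ rounds contributes two unit weights (its first and last step, which are distinct because $K\ge2$) and $K-2$ weights equal to $c$, whence $W = \frac{T}{K}\big(2+(K-2)c\big) = \frac{cKT+2(1-c)T}{K}$ and $\eta W = \frac{cKT+2(1-c)T}{2KL}$. Dividing then yields $\E[f(\hat\bx^T)-f^*]\le \|\bx^{(0)}-\bx^*\|^2/(\eta W) = \frac{2KL\|\bx^{(0)}-\bx^*\|^2}{cKT+2(1-c)T}$, as claimed.
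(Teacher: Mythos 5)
Your proposal is correct and is essentially the paper's own proof in different bookkeeping: the per-node self-bounding progress bound with $\eta=\tfrac{1}{2L}$, the use of Definition~\ref{def:1} at interior steps, the smoothness-based treatment of the step just before each communication, and the final telescoping, Jensen step, and weight count $W=\tfrac{T}{K}\bigl(2+(K-2)c\bigr)$ all coincide with the paper's argument. In particular, your ``average the two one-step estimates'' cancellation is algebraically identical to the paper's handling of the drift term $T_1$ (where $T_1\ge \tfrac{T_1}{2}$ is combined with a smoothness lower bound on $T_1$), so the two proofs generate the same inequalities.
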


The convergence of Local GD for general convex loss functions in the over-parameterized setting was shown earlier in \citet{zhang2021distributed} without giving an explicit convergence rate.\footnote{In fact, a convergence rate of $\O({1}/{\sqrt{T}})$ was discussed in \citet{zhang2021distributed}. However, the argument in their proof seems to have some inconsistencies. For more detail, please see Section~\ref{subsec:issue}.} Instead, for similarity parameters $c>0$ and $c=0$, we give convergence rates of $\O({1}/{T})$ and $\O({K}/{T})$ for Local SGD, respectively. The significant difference between the convergence rates for the case of $c>0$ and $c=0$ suggests that having slight similarity in the local loss functions is critical to the performance of Local SGD, which also complies with the simulation findings in ~\citet{mcmahan2017communication}. To the best of our knowledge, Theorem \ref{theo:convex} provides the first $\O({1}/{T})$  or $\O({K}/{T})$ convergence rates for Local SGD for general convex loss functions in the over-parameterized setting. On the other hand, in Section \ref{sec:lowerbound}, we provide a problem instance suggesting that in the worst case, the $\O({K}/{T})$ convergence rate obtained here might be tight up to a constant factor.

It is worth noting that the speedup effect of local steps when $c>0$ is a direct consequence of the $\O({1}/{T})$ convergence rate shown in Theorem~\ref{theo:convex}. When $c=0$, a closer look at~\eqref{eq:rateconvex} and the weights $w_t$ reveals that $w_t = 1$ if $t\in \I$ or $t+1\in \I$, implying that at least the first {and} the last local steps during each communication round is ``effective". This, in turn, shows that local steps can speed up the convergence of Local SGD by at least a factor of $2$\footnote{{Similar to \citet{woodworth2020minibatch}, we compare the convergence rate of Local SGD to Minibatch SGD with $R = T/K$ steps and a batch size $K$ times larger than that of Local SGD. The convergence rate of Minibatch SGD (for over-parameterized setting), as stated in Theorem 6 in \citet{vaswani2019fast}, is $\frac{4LK(1+\rho)\|\bx^{(0)} -\bx^*\|^2}{T}$, which is at least $4$ times slower than our rate when $c=0$. On the other hand, according to our analysis and using Lemma \ref{lem:localprogress_g}, we can show the convergence rate of Minibatch SGD as $\frac{2LK\|\bx^{(0)} -\bx^*\|^2}{T}$, which is $2$ times slower than Local SGD.}}.

For the case of non-convex loss functions, we have the following result.
\begin{theorem}[Non-convex functions]\label{theo:non_convex}
	{Let Assumption \ref{assum:general}, Assumption \ref{assum:SGC} {(SGC)} hold. If we follow Algorithm~\ref{alg:1} with stepsize $\eta \le\frac{1}{3KL\rho}$, and \mbox{$K\ge 2$}, we will have
		\begin{align*}
			\min_{0\le t\le T-1}\E\|\nabla f(\bbx^{t})\|^2\le \frac{9(f(\bx_0)-f^*)}{\eta T}.
		\end{align*}
		As a special case, if we choose $\eta =\frac{1}{3KL\rho}$, we have
		\begin{align}\label{eq:ratenonconvex}
			\min_{0\le t\le T-1}\E\|\nabla f(\bbx^{t})\|^2\le \frac{27KL\rho (f(\bx_0)-f^*)}{T}.
		\end{align}
	}
\end{theorem}

Theorem~\ref{theo:non_convex} provides an $\O({K}/{T})$ convergence rate for Local SGD for non-convex loss functions in the over-parameterized setting, which is the first $\O({1}/{T})$ convergence rate for Local SGD under this setting. However, this rate is somewhat disappointing as it suggests that local steps may not help the algorithm to converge faster. This is mainly caused by the choice of stepsize $\eta =\frac{1}{3KL\rho}$, which is proportional to ${1}/{K}$. On the other hand, in Section~\ref{sec:lowerbound}, we argue that this choice of stepsize may be inevitable in the worst case because there are instances for which the choice of stepsize $\eta$ greater than $\O({1}/{K})$ results in divergence of the algorithm.

\subsection{{Lower Bounds for the Convergence Rate of Local SGD}}
\label{sec:lowerbound}

{In this section, we present two instances of Problem \eqref{eq:f} showing that the convergence rates shown in Section \ref{sec:convergerate} are indeed tight up to a constant factor. First of all, we restrict to the scenario when Local SGD is run with stepsize $\eta\le \frac{1}{L}$, as it is known from \citet{nesterov2018lectures} that Gradient Descent can provably diverge for stepsize $\eta> \frac{1}{L}$\footnote{We note that $\eta\le {1}/{L}$ is a standard requirement when applying SGD-like algorithms on $L$-smooth functions, see e.g., \citet{bubeck2014convex}. Many numerical experiments also show that stepsize $\eta> {1}/{L}$ will cause divergence. In other words, when Local SGD is run with stepsize $\eta> \frac{1}{L}$, there are problem instances that Local SGD perfroms poorly}. Then, we show that when Local SGD is run with stepsize $\eta\le \frac{1}{L}$ and under the over-parameterized regime}:
\begin{enumerate}
	\item for general convex loss functions, there exist functions $f_i$ satisfying Assumption \ref{assum:general}, Assumption \ref{assum:convex} and Assumption \ref{assum:interpolation} {(Interpolation)} with $\mu =0$ and $c=0$ {in Definition \ref{def:1}}, such that Local SGD incurs an error bound of $f(\bbx^T)-f^*= \Omega({KL}/{T})$.
	\item for non-convex loss functions, there exist functions $f_i$ satisfying Assumption \ref{assum:general}, Assumption~\ref{assum:SGC} {(SGC)}, such that Local SGD with a stepsize $\eta\ge \frac{2}{LK}$ will not converge to a first-order stationary point.
\end{enumerate}

\begin{proposition}[General Convex Functions]
	\label{exp:1}
	There exists an instance of general convex loss functions $f_i$ satisfying Assumption \ref{assum:general}, Assumption \ref{assum:convex} and Assumption \ref{assum:interpolation} {(Interpolation)} with $\mu =0$ and $c=0$ {in Definition \ref{def:1}}, such that Local SGD incurs an error bound of $f(\bbx^T)-f^*= \Omega({KL}/{T})$.
\end{proposition}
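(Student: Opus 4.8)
The plan is to exhibit an explicit quadratic instance in which local steps are provably wasteful. I would work in $\R^{d}$ with $d=n$ nodes, let $\{e_i\}_{i=1}^n$ be the standard basis, and set $f_i(\bx)=\frac{L}{2}\langle e_i,\bx\rangle^2$, so that $f(\bx)=\frac{L}{2n}\|\bx\|^2$ with $\bx^*=0$ and $f^*=0$. Each $f_i$ is convex and $L$-smooth (its Hessian is the rank-one matrix $L\,e_ie_i^{\top}$) but flat in the $n-1$ directions orthogonal to $e_i$, so Assumption~\ref{assum:convex} holds with $\mu=0$, and interpolation (Assumption~\ref{assum:interpolation}) holds since $\nabla f_i(0)=0$. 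To verify $c=0$, I would pick $\bx_i$ with $\langle e_i,\bx_i\rangle=0$ for every $i$ (each node resting at its own minimizer) but with $\bbx=\frac1n\sum_i\bx_i\neq 0$, e.g. $\bx_i=\mathbf 1-e_i$; then the left-hand side of \eqref{eq:localgap} vanishes while $f(\bbx)-f^*>0$, which forces $c=0$.

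Next I would track the iterates exactly. The key structural fact is that $\nabla f_i(\bx)=L\langle e_i,\bx\rangle e_i$ touches only coordinate $i$, so within one communication round node $i$ contracts its own coordinate by $(1-\eta L)^K$ and leaves every other coordinate untouched. Averaging then contracts every coordinate by the \emph{same} factor
\begin{align}\nonumber
\beta=\frac{(n-1)+(1-\eta L)^K}{n},
\end{align}
since coordinate $j$ collects the shrunk value from node $j$ together with $n-1$ unchanged copies from the remaining nodes. Hence the common averaged iterate obeys $\bbx^{(rK)}=\beta\,\bbx^{((r-1)K)}$, so after $R=T/K$ rounds $\bbx^{(T)}=\beta^{R}\bx^{(0)}$ and therefore $f(\bbx^{(T)})-f^*=\frac{L}{2n}\beta^{2R}\|\bx^{(0)}\|^2$.

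It then remains to lower-bound this quantity uniformly over $\eta\le 1/L$. Since $\eta\le 1/L$ gives $(1-\eta L)^K\ge 0$, we have $\beta\ge 1-\frac1n$, whence $f(\bbx^{(T)})-f^*\ge \frac{L}{2n}\big(1-\frac1n\big)^{2R}\|\bx^{(0)}\|^2$. Choosing the instance size $n=R=T/K$ and invoking the elementary bound $\big(1-\frac1n\big)^{2n}\ge\frac1{16}$ for all $n\ge 2$, this is at least $\frac{L}{32\,n}\|\bx^{(0)}\|^2=\Omega\!\big(\frac{KL}{T}\big)$ once $\|\bx^{(0)}-\bx^*\|$ is normalized to a constant. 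As the $f_i$ are deterministic, the same estimate holds for $\E\|\cdot\|$ trivially, so the instance in fact lower-bounds noiseless Local GD as well.

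The main obstacle is conceptual rather than computational: for any \emph{fixed} instance the contraction factor $\beta$ is bounded away from $1$, so $\beta^{R}$ decays geometrically in the number of rounds $R$ and the residual is exponentially small, nowhere near $K/T$. The resolution, and the crux of the argument, is to let the number of nodes (equivalently the dimension) grow linearly with the number of communication rounds, $n=\Theta(R)=\Theta(T/K)$, so that the per-round contraction $\beta=1-\Theta(1/n)$ compounds over only $R=n$ rounds into a mere constant factor, leaving the $\Theta(1/n)=\Theta(K/T)$ residual that drives the bound. The two quantitative checks that make this rigorous—that $\beta$ is monotone in $\eta$ (so $\eta=1/L$ is the most favorable stepsize and still fails) and the uniform estimate $\big(1-\frac1n\big)^{2n}\ge\frac1{16}$—are both elementary.
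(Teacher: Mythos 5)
Your proof is correct and follows essentially the same approach as the paper's: both constructions make averaging dilute the local progress to a per-round contraction factor of $1-\Theta(1/n)$, take the number of nodes proportional to the number of rounds $R=T/K$ so that the compounded contraction is only a constant, and finish with the elementary bound $(1-1/n)^{\Theta(n)}=\Theta(1)$. The paper's instance is the one-dimensional $f_1(x)=\frac{L}{2}x^2$, $f_2=\cdots=f_n=0$, while your symmetric $n$-dimensional variant with orthogonal rank-one quadratics is an equally valid (and exactly solvable) realization of the same idea.
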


\begin{proof}
	Consider Problem~\eqref{eq:f} in the following setting. Let $n=4R = {4T}/{K},\ d=1$, and $f_1(x) = \frac{L}{2}x^2,\ f_2(x)=f_3(x)=\cdots=f_n(x)=0$. Then $f(x) = \frac{L}{2n}x^2$, and clearly every $f_i$ is $L$-smooth and satisfies Assumption \ref{assum:convex} and  Assumption \ref{assum:interpolation} {(Interpolation)} with $\mu =0,\ c=0$. Suppose Algorithm \ref{alg:1} is run with stepsize $\eta\le \frac{1}{L}$, and initialized at $\bx^0 = 1$. We will show that $\min_{t\in [T]}(f(\bbx^t)-f^*)\ge \frac{KL}{16T}$. To that end, first we note that the global optimal point is \mbox{$\bx^*=0$}, and local gradient steps for all nodes except node $1$ keeps local variable unchanged. Moreover, since $\eta\le {1}/{L}$, we have $\bx_1^t\in[0,1],\ \forall t\in[T]$. Therefore, $\{\bbx^t\}$ is a non-increasing sequence that lies in interval $[0,1]$. Thus, we only need to show $f(\bbx^T)\ge \frac{KL}{16T}.$
	
	Next, we claim that $\bbx^{(r+1)K}\ge \frac{n-1}{n}\bbx^{rK},\ \forall r$. In fact, since $\bx_1^{(r+1)K-1/2}\ge 0$ and $\bx_i^{(r+1)K-1/2}=\bbx^{rK}$, for $i=2,\ldots,n$, we have 
	\begin{align}\nonumber
		\bbx^{(r+1)K}=\frac{1}{n}\sum_{i = 1}^n\bx_i^{(r+1)K-1/2}\ge\frac{n-1}{n}\bbx^{rK}.
	\end{align}
	Therefore, we can write
	\begin{align*}
		f(\bbx^T)&=\frac{L}{2n}(\bbx^T)^2\ge\frac{L}{2n}(\frac{n-1}{n})^{2R}\ge\frac{L}{2n}(1-\frac{2R}{n})=\frac{L}{16R}=\frac{KL}{16T},
	\end{align*}
	which completes the proof.
\end{proof}

\begin{proposition}[Non-convex Functions]
	\label{exp:2}
	There exists an instance of nonconvex loss functions $f_i$ satisfying Assumption \ref{assum:general}, Assumption~\ref{assum:SGC} {(SGC)}, such that Local SGD with a stepsize $\eta\ge \frac{2}{LK}$ will not converge to a first-order stationary point.
\end{proposition}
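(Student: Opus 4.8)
The plan is to exhibit a one-dimensional instance in which client drift over the $K$ local steps behaves like a single overshooting gradient step of effective stepsize $\eta K$, so that the averaged iterate is amplified by a factor strictly larger than one per communication round precisely when $\eta K L \ge 2$. Concretely, I would take $d=1$, $n=3$, and set $f_1(x) = -\frac{L}{2}x^2$ and $f_2(x)=f_3(x)=\frac{L}{2}x^2$. Each $f_i$ is $L$-smooth, $f_1$ is strictly concave and hence nonconvex, and the global objective is $f(x)=\frac{L}{6}x^2$, which is bounded below and has its unique first-order stationary point at $x=0$. First I would verify Assumption~\ref{assum:SGC}: since $\nabla f_1(x)=-Lx$, $\nabla f_2(x)=\nabla f_3(x)=Lx$, and $\nabla f(x)=\frac{L}{3}x$, we get $\|\nabla f_i(x)\|^2 = L^2x^2 = 9\,\|\nabla f(x)\|^2$ for every $i$, so SGC holds with $\rho=9$; in particular the three local gradients share the common zero $x=0$, as SGC requires.

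The next step is to compute the exact per-round dynamics of the averaged iterate. Because every gradient is linear, a single local step at node $i$ is the map $x \mapsto (1-\eta A_i)x$ with $A_1=-L$ and $A_2=A_3=L$. Starting a communication round from the common value $z_r \coloneqq \bbx^{(rK)}$, node $i$ performs $K$ such steps, reaching $(1-\eta A_i)^K z_r$ just before the averaging in line~7 of Algorithm~\ref{alg1}; averaging then yields
\begin{align}\nonumber
	z_{r+1} = \psi(\eta)\,z_r, \qquad \psi(\eta) \coloneqq \tfrac{1}{3}\big[(1+\eta L)^K + 2(1-\eta L)^K\big].
\end{align}
Iterating gives the closed form $\bbx^{(rK)} = \psi(\eta)^r\,\bbx^{(0)}$, so the whole question reduces to locating the stepsizes for which $\psi(\eta)>1$.

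Finally I would show $\psi(\eta)>1$ on the entire relevant range $\frac{2}{LK}\le \eta \le \frac{1}{L}$. Writing $s=\eta L\in[\frac{2}{K},1]$, the term $2(1-s)^K$ is nonnegative, so $\psi(\eta) \ge \frac{1}{3}(1+s)^K \ge \frac{1}{3}\big(1+\frac{2}{K}\big)^K$; since $\big(1+\frac{2}{K}\big)^K$ is increasing in $K$ and equals $4$ at $K=2$, this gives $\psi(\eta)\ge \frac{4}{3}>1$ for every $K\ge 2$. Hence, from any $\bbx^{(0)}\neq 0$, we have $|\bbx^{(rK)}| = \psi(\eta)^r|\bbx^{(0)}|\to\infty$ and $\|\nabla f(\bbx^{(rK)})\| = \frac{L}{3}|\bbx^{(rK)}|\to\infty$, so the iterates diverge and Local SGD cannot converge to the unique first-order stationary point $x=0$. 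I would close with the remark that the threshold is genuine: $\psi(0)=1$ and $\psi'(0)=-\frac{KL}{3}<0$, so $\psi(\eta)<1$ for small $\eta$ and the method converges there, confirming that the $\eta=\O(1/K)$ scaling of Theorem~\ref{theo:non_convex} is essentially unavoidable. The main obstacle is the joint design constraint: creating an unstable mode by making one curvature negative while keeping $\nabla f\not\equiv 0$ away from the origin (so that SGC can hold at all) forces $n\ge 3$, and one must then prove $\psi(\eta)>1$ \emph{uniformly} over all $K\ge 2$ and all $\eta\in[\frac{2}{LK},\frac{1}{L}]$, rather than only in the $K\to\infty$ limit where $\big(1+\frac{2}{K}\big)^K\to e^2$.
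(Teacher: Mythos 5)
Your proof is correct, but it takes a genuinely different route from the paper's. The paper uses $n=2$ with \emph{asymmetric} curvatures, $f_1(x)=\frac{L}{2}x^2$ and $f_2(x)=-\frac{L}{4}x^2$, and argues by induction on communication rounds: if $\bbx^{rK}\ge 1$, the concave node's iterate is pushed up to at least $2$ (this is where $\eta\ge\frac{2}{LK}$ enters), while the convex node's iterate stays nonnegative (this is where $\eta\le\frac{1}{L}$ enters), so the average stays at least $1$; this yields a uniform constant lower bound on $\|\nabla f(\bbx^{t})\|^2$ for \emph{every} iterate $t$. You instead take $n=3$ with symmetric curvatures $\pm\frac{L}{2}x^2$, exploit the linearity of all gradients to get the exact per-round multiplier $\psi(\eta)=\frac{1}{3}\big[(1+\eta L)^K+2(1-\eta L)^K\big]$, and prove $\psi\ge\frac{4}{3}$ uniformly over $\eta L\in[\frac{2}{K},1]$ and $K\ge2$; this gives geometric blow-up of the averaged iterates (a stronger form of divergence, albeit established along the subsequence of communication times, which still rules out convergence), and the closed form makes the threshold phenomenon transparent --- your observation that $\psi'(0)<0$ cleanly explains why small stepsizes contract while $\eta\ge\frac{2}{LK}$ amplifies. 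Both arguments operate under the section's standing restriction $\eta\le\frac{1}{L}$, so the scopes match. One side remark of yours is inaccurate: such an instance does not ``force $n\ge 3$.'' The paper's own two-node construction works precisely because it makes the negative curvature smaller in magnitude ($-\frac{L}{4}$ against $\frac{L}{2}$), so the average $f$ retains positive curvature and SGC can hold; $n\ge3$ is only needed if you insist both curvatures have magnitude exactly $L$, as in your symmetric design. This does not affect the validity of your proof.
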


\begin{proof}
	Consider Problem~\eqref{eq:f} in the following setting. Let $n=2$, $d=1$ and $f_1(x)=\frac{L}{2}x^2,f_2(x)=-\frac{L}{4}x^2$. Then $f(x) = \frac{L}{4}x^2$, and clearly every $f_i$ is $L$-smooth and satisfies Assumptions~\ref{assum:SGC} {(SGC)} with $\rho =2$. Suppose Algorithm~\ref{alg:1} is run with stepsize $\eta\le {1}/{L}$ and initialized at $\bx^0 = 1$. We want to show that for such distributed stochastic optimization problem, if we run Algorithm~\ref{alg:1} for any stepsize $\eta \ge \frac{2}{LK}$, the gradient norm at any iterate will be lower bounded by $\min_{t\in [T]}\|\nabla f(\bbx^{t})\|^2\ge \frac{L^2}{16}$.
	
	First, we note that the global optimal point is $\bx^*=0$, which is the only critical point. Since $\eta\le {1}/{L}$, we have $\bx_1^t\ge 0,\ \forall t\in[T]$, and local gradient steps for node $2$ will always increase the value of $\bx_2^t$. Next, we claim that if $\eta \ge \frac{2}{LK}$, then $\bbx^{rK}\ge 1,\ \forall r$, and prove it by induction. First notice that $\bbx^0=1\ge1$. Suppose $\bbx^{rK}\ge 1$, then
	\begin{align*}
		\bx_2^{(r+1)K-\frac{1}{2}}&=\bbx^{rK}-\eta\sum_{t=rK}^{(r+1)K-1}\nabla f_2(\bx_2^{t})\\
		&=\bbx^{rK}+\eta\sum_{t=rK}^{(r+1)K-1}\frac{L}{2}\bx_2^t\\
		&\ge 1+\frac{2}{LK}\sum_{t=rK}^{(r+1)K-1}\frac{L}{2}=2.
	\end{align*} 
	Since $\bx_1^{(r+1)K-1/2}\ge0$, we have 
	\begin{align}\nonumber
		\bbx^{(r+1)K}=\frac{1}{2}(\bx_1^{(r+1)K-1/2}+\bx_2^{(r+1)K-1/2})\ge 1,
	\end{align}
	which proves the claim. Therefore, $\bx_2^t\ge 1,\ \forall t\in[T]$ and $\bx_1^t\ge 0,\ \forall t\in[T]$, which implies $\bbx^t\ge{1}/{2},\forall t\in[T]$. This shows that $\|\nabla f(\bbx^{t})\|^2\ge {L^2}/{16}$, as desired.
\end{proof}

\begin{remark}
	According to Proposition~\ref{exp:2}, in the worst case a stepsize of $\eta\le \O({1}/{K})$ for Local SGD is inevitable. This in view of~\citet{vaswani2019fast} implies a convergence rate of at most $\O({K}/{T})$.
\end{remark}

\section{Numerical Analysis}
\label{sec:simulation}

In this section, we conduct some numerical experiments where we use Local SGD to train an over-parameterized ResNet18 neural network~\citep{he2016deep} on the Cifar10 dataset~\citep{krizhevsky2009learning}. This is a standard setting of nonconvex functions under over-parameterization. {We also conduct another set of experiments focusing on general convex objective functions, where a perceptron is trained for a synthetic linearly separable binary classification dataset.}

\subsection{ResNet18 Neural Network for Cifar10}
\label{subsec:cifar}

We distribute the Cifar10 dataset~\citep{krizhevsky2009learning} to $n=20$ nodes and apply Local SGD to train a ResNet18 neural network \citep{he2016deep}. The neural network has 11 million trainable parameters and, after sufficient training rounds, can achieve close to $0$ training loss, thus satisfying the interpolation property. 

For this set of experiments, we run the Local SGD algorithm for $R=20000$ communication rounds with a different number of local steps per communication round $K=1,2,5,10,20$ and report the training error of the global model along the process. We do not report the test accuracy of the model, which is related to the generalization of the model and is beyond the scope of this work\footnote{Without data augmentation, the final test accuracy of the model in this set of experiments is around $80\%$. }. Following the work ~\citet{hsieh2020non}, we also use Layer Normalization~\citep{ba2016layer} instead of Batch Normalization in the architecture of ResNet18 while keeping everything else the same.

We first sort the data by their label, then divide the dataset into $20$ shards and assign each of $20$ nodes $1$ shard. In this way, ten nodes will have image examples of one label, and ten nodes will have image examples of two labels. This regime leads to highly heterogeneous datasets among nodes. We use a training batch size of $8$ and choose stepsize $\eta$ to be $0.1$ based on a grid search of resolution $10^{-2}$. The simulation results are averaged over $3$ independent runs of the experiments. We show the global landscape of the result in figures~\ref{fig:01} and \ref{fig:02}, where the training loss and the reciprocal of the training loss over the communication rounds are reported, respectively.  The decrease in the training loss can be divided into Phase 1, Phase 2, and a transition phase between them, as shown in figures~\ref{fig:03}, \ref{fig:04}, and  \ref{fig:05}.

{\bf Phase 1:} As figure~\ref{fig:03} shows, in the first $\approx 3000$ communication rounds, the reciprocal of the training loss grows nearly linearly with respect to the number of communication rounds. This is strong evidence of the $\O({1}/{R})=\O({K}/{T})$ convergence rate of Local SGD as we stated in Theorem \ref{theo:non_convex}. We can also see that in this phase, the decrease of training loss depends only on the number of communication rounds $R$ regardless of the number of local steps $K$, thus validating Theorem \ref{theo:non_convex}.

{\bf Phase 2:} After $\approx 6000$ communication rounds, as the training loss further decreases (below 0.01), we can observe from figures~\ref{fig:05} and figure~\ref{fig:b} a clear linear dependence of the reciprocal of the training loss and the total iterations $T$ (notice in figure~\ref{fig:b} all lines share similar slope). This corresponds to a $\O({1}/{T})$ convergence rate of Local SGD. In fact, we conjecture that in this phase, the model has moved close enough to the neighborhood of a global optimal point, which simultaneously minimizes the loss at every single data point. Therefore, every local step moves the model closer to that global optimal point regardless of at which node it is performed, causing the aggregation step to be no longer meaningful and resulting in the convergence rate of $\O({1}/{T})$ instead of $\O({1}/{R})$. {Another possible explanation is that in Phase 2, the iterates eventually reach a locally convex region and so resemble the convex regime.}

{The experimental results provided here have two important implications:
	\begin{itemize}
		\item First, from the upper bound in Theorem~\ref{theo:non_convex}, lower bound in Proposition~\ref{exp:2} and the experimental results in Figures~\ref{fig:a},\ref{fig:b},\ref{fig:c}, we can imply that for over-parameterized deep learning models, Local SGD indeed converges at a $\O(\frac{1}{T})$ rate\footnote{{If taken into consideration the factor of $K$, then the rate is between $\O({K}/{T})$ (Phase 1), and $\O({1}/{T})$ (Phase 2).}}, which is a strong characterization of the algorithm's actual convergence rate.
		\item Second, the phenomenon of the two phases also gives us an important empirical implication that in real implementations of the Local SGD algorithm, it might be better to adjust the number of local steps during each communication interval to enforce more frequent communication at first (as in Phase 1 the $\O({1}/{R})=\O({K}/{T})$ convergence rate suggests local steps are more or less useless) and less frequent communication later on when one observes the training has entered Phase 2 (by, e.g., observing training loss $\le 0.05$).
	\end{itemize} 
}
To conclude, we have performed large scale experiments that reveal the convergence behavior of Local SGD for practical over-parameterized deep learning models. We observe from the experiments that the decrease of the training loss can be divided into Phase 1, Phase 2, and a transition phase between them. The convergence rate of Local SGD in practice can be $\O({K}/{T})$ (Phase 1), or $\O({1}/{T})$ (Phase 2), or somewhere in between (transition phase). Experimental results in Phase 1 strongly support our theoretical findings in Theorem \ref{theo:non_convex}, while experimental results in Phase 2 partially support it and also raise new interesting questions.

\begin{figure}[h]
	\centering
	\begin{subfigure}{0.49\linewidth}
		\includegraphics[width=\linewidth]{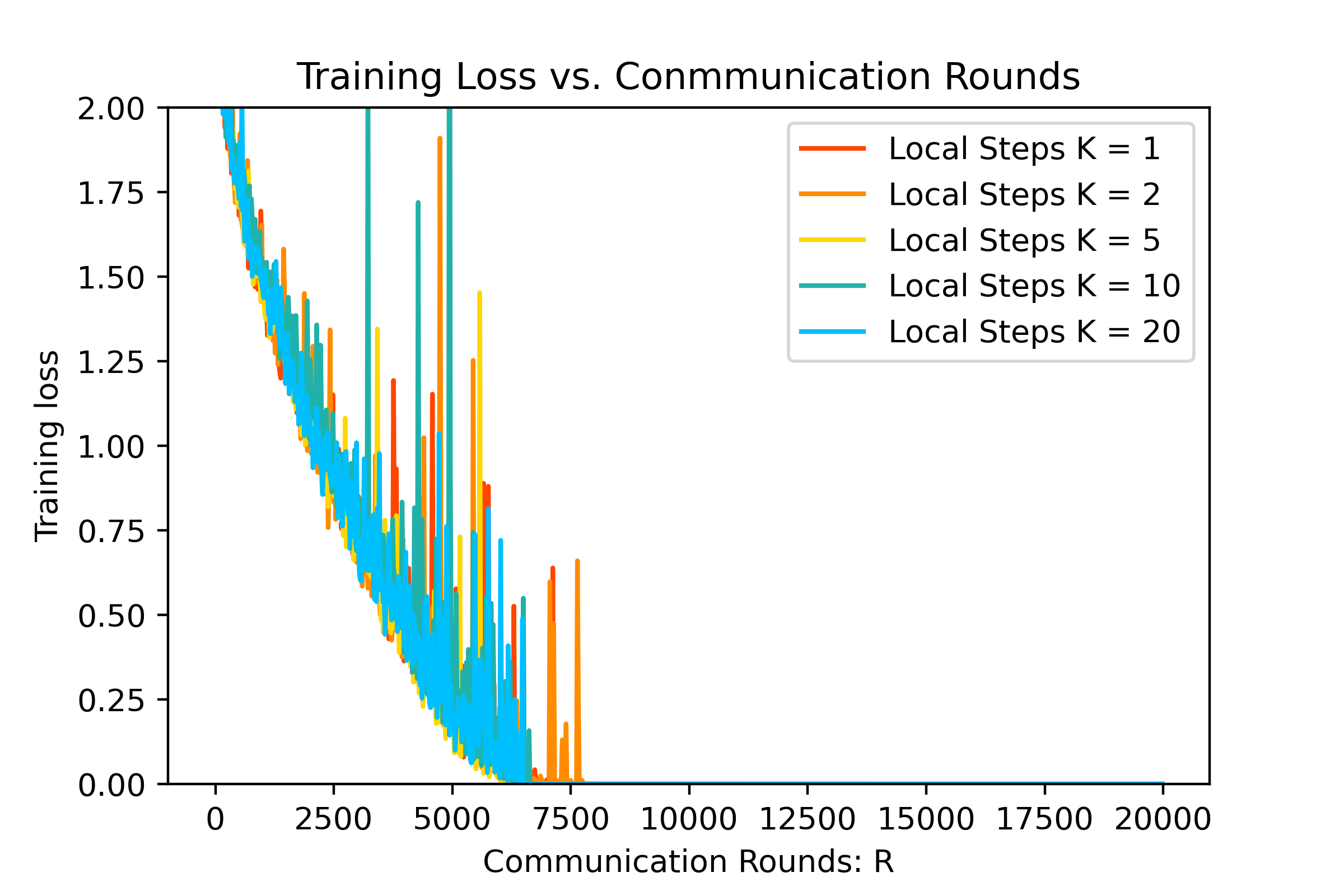}
		\caption{}
		\label{fig:01}
	\end{subfigure}
	\begin{subfigure}{0.47\linewidth}
		\includegraphics[width=\linewidth]{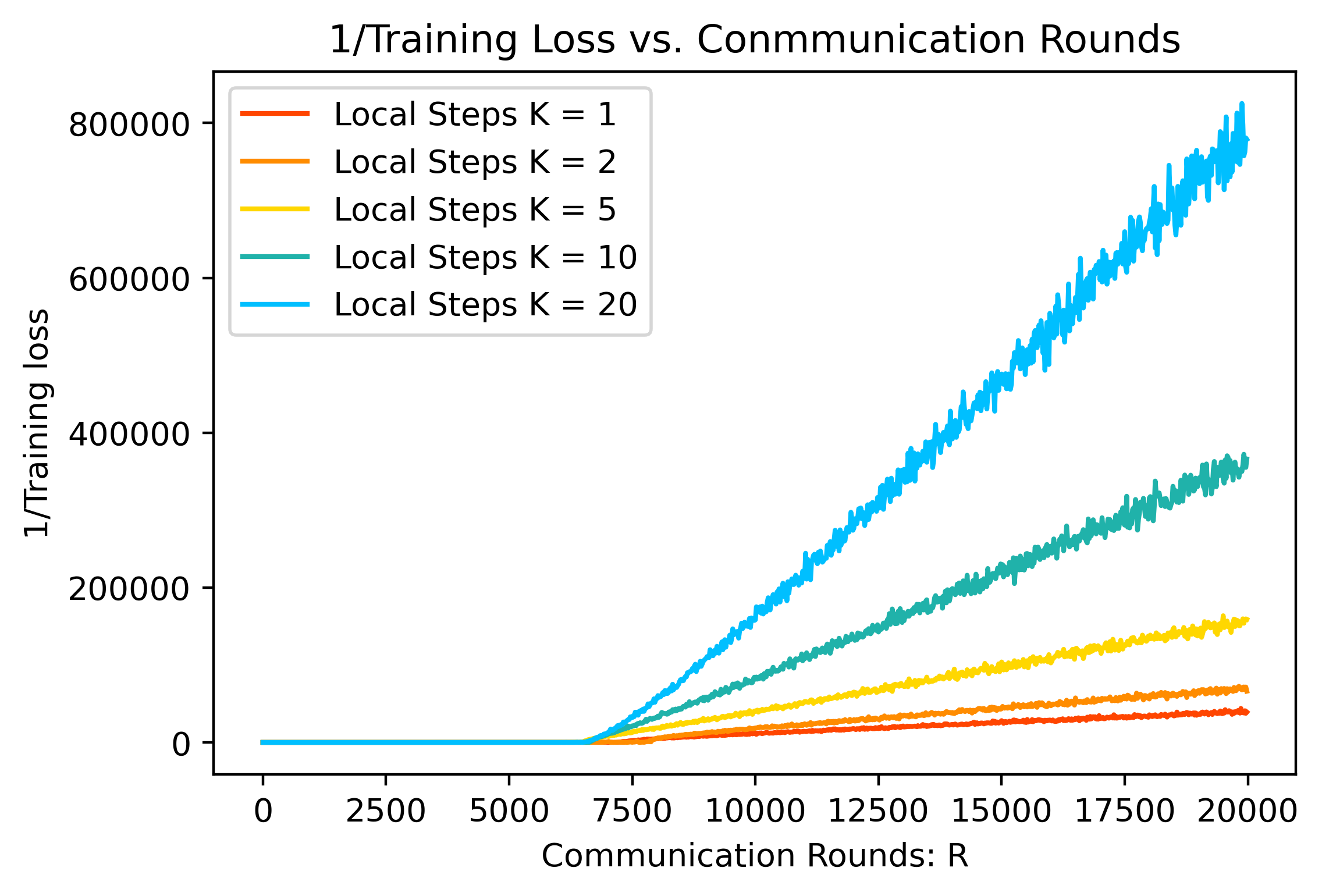}
		\caption{}
		\label{fig:02}
	\end{subfigure}
	\caption{\ref{fig:01}: Training loss vs. communication rounds with different local steps. \ref{fig:02}: 1/Training loss vs. communication rounds with different local steps.}
	\label{fig:a}
\end{figure}

\begin{figure}[h]
	\centering
	\includegraphics[width=0.5\linewidth]{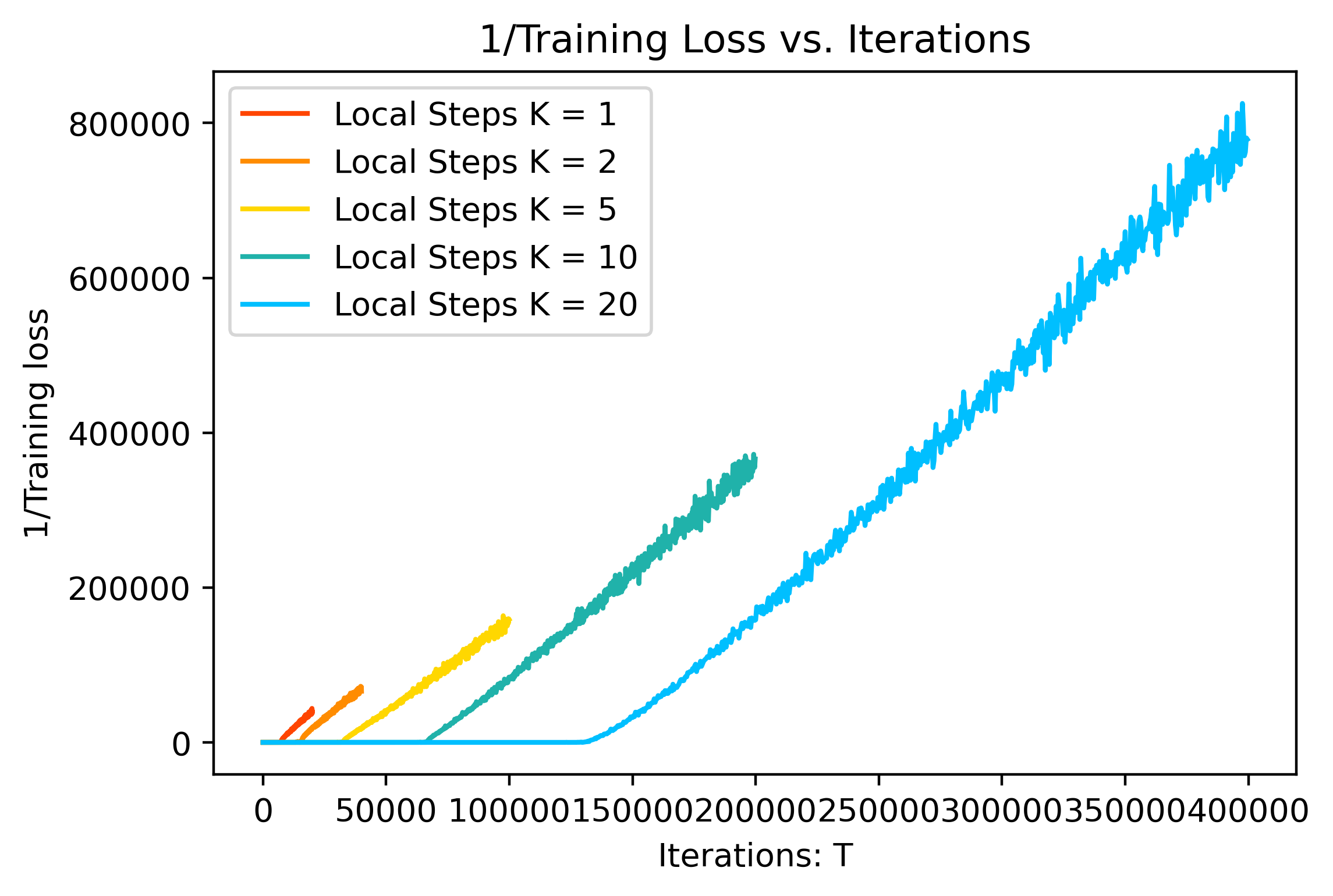}
	\caption{1/(Training loss) vs. Total number of iterations $T$ with different local steps.}
	\label{fig:b}
\end{figure}

\begin{figure}[h]
	\centering
	\begin{subfigure}{0.32\linewidth}
		\includegraphics[width=\linewidth]{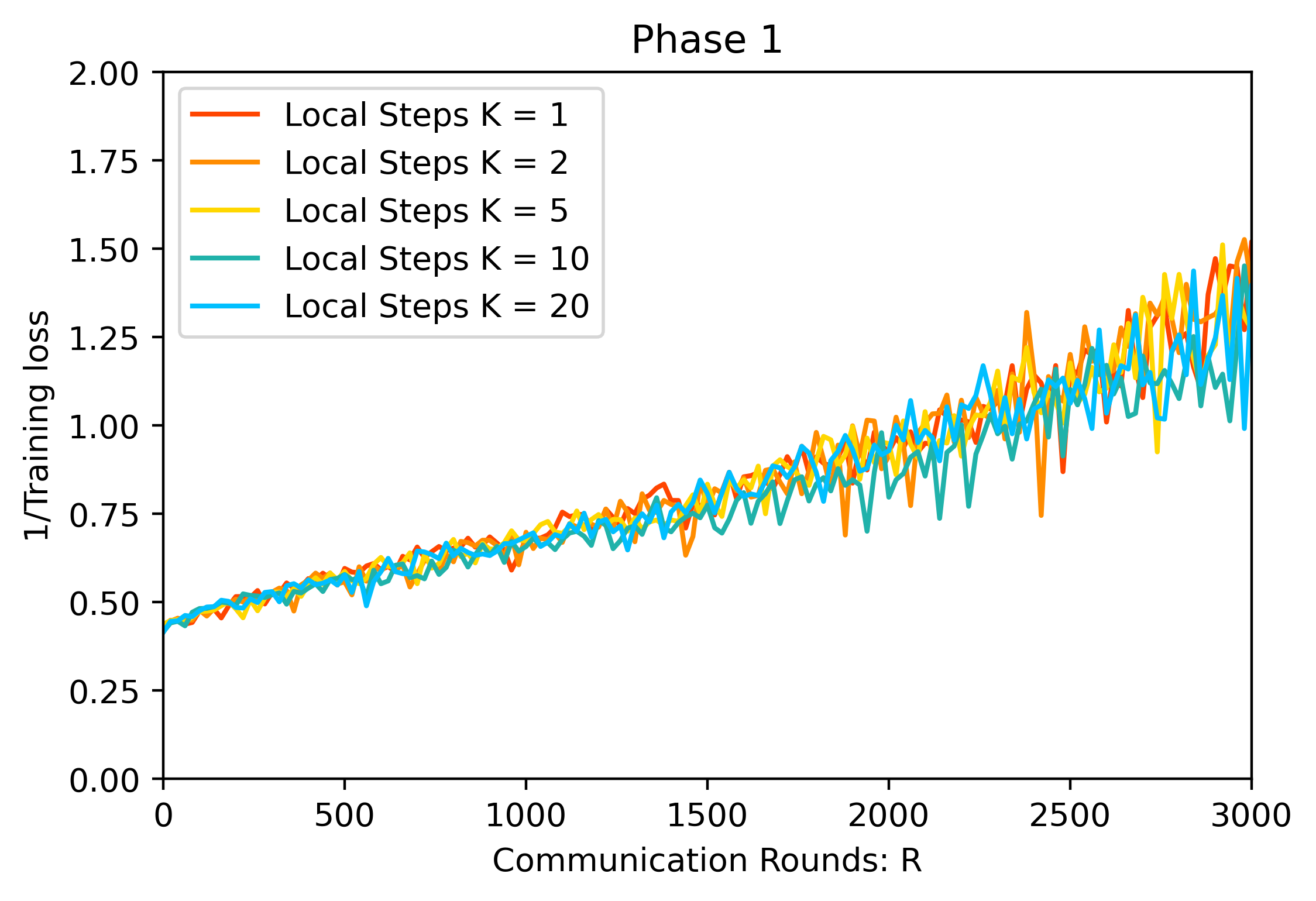}
		\caption{}
		\label{fig:03}
	\end{subfigure}
	\begin{subfigure}{0.32\linewidth}
		\includegraphics[width=\linewidth]{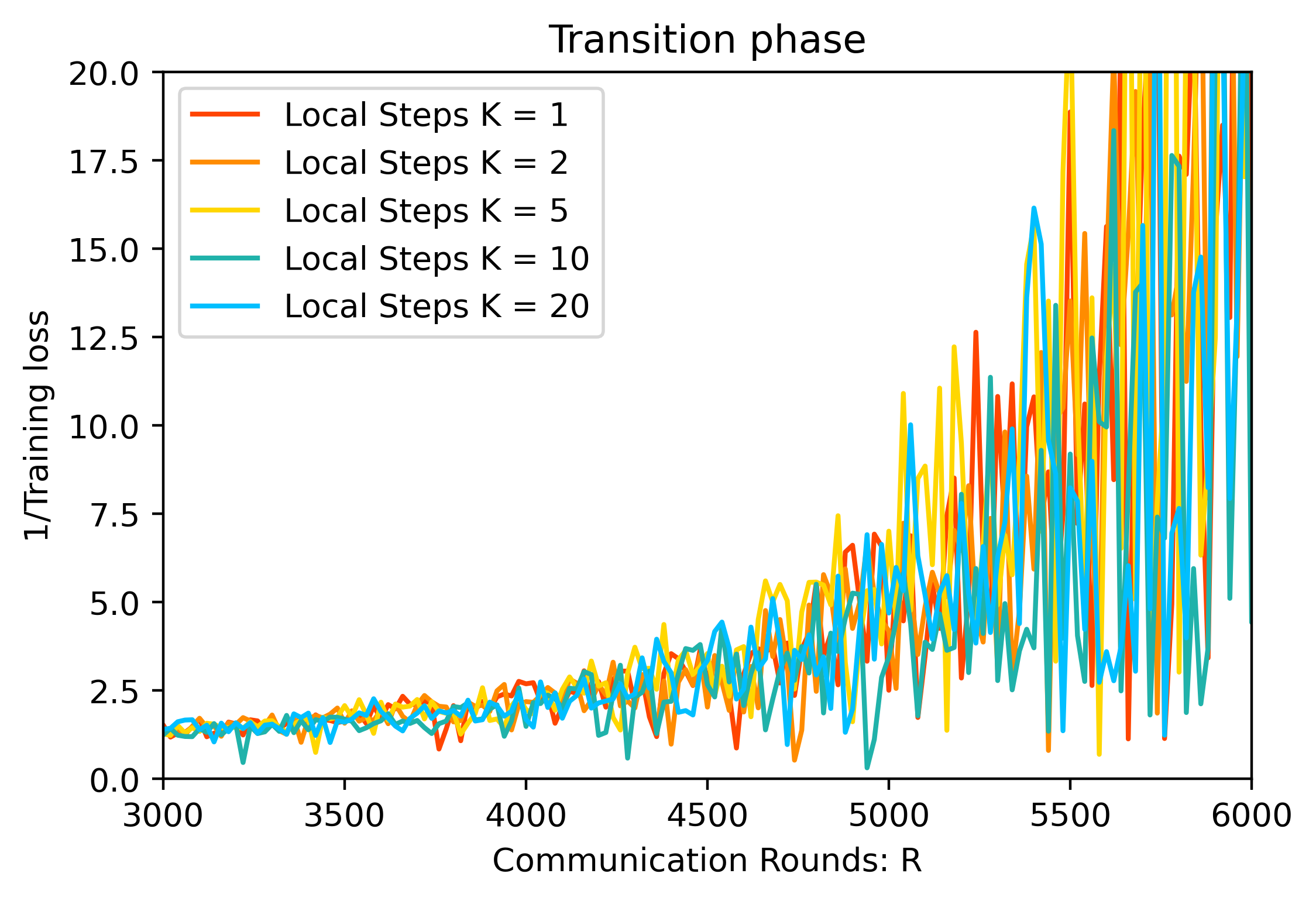}
		\caption{}
		\label{fig:04}
	\end{subfigure}
	\begin{subfigure}{0.32\linewidth}
		\includegraphics[width=\linewidth]{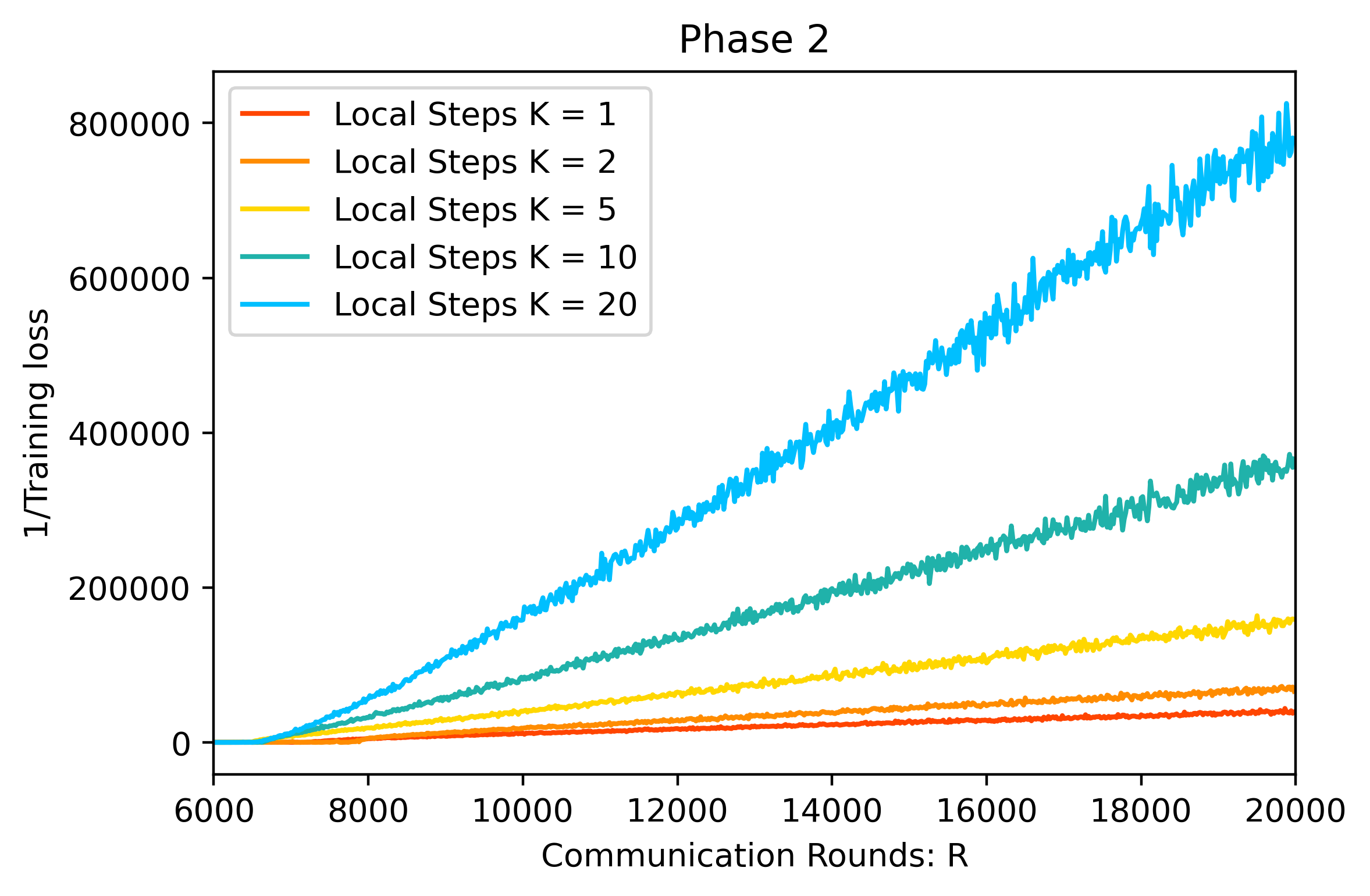}
		\caption{}
		\label{fig:05}
	\end{subfigure}
	\caption{1/(Training loss) vs. communication rounds for different phases. \ref{fig:03}: Phase 1. \ref{fig:04}: Transition phase. \ref{fig:05}: Phase 2.}
	\label{fig:c}
\end{figure}

\subsection{{Perceptron for Linearly Separable Dataset}}
\label{sec:perceptron}
We generate a synthetic binary classification dataset with $N = 10000$ data-points uniformly distributed in a $d = 100$ dimensional cube $[-1,1]^d$. Then, a hyperplane is randomly generated, and all data points above it are labeled `$1$' with other data points labeled `$-1$', after which the two sets of points with different labels are "pulled apart" to add a small gap between them. This ensures the dataset is linearly separable and satisfies the interpolation property. We divide the dataset among $n=16$ nodes and apply Local SGD to distributedly train a perceptron to minimize the finite-sum squared-hinge loss function: 
\begin{align}\nonumber
	f(w) = \frac{1}{N}\sum_{i=1}^{N}\max(0,1-y_ix_i^Tw)^2.
\end{align}

We partition the dataset in three different ways to reflect different data similarity regimes and evaluate the relationship between training loss, communication rounds, and local steps for Local SGD under each of the three regimes. {As in Section \ref{subsec:cifar}, we plot both the training loss and 1/(training loss) vs. the number of communication rounds. We stop the algorithm after at most $10^6$ communication rounds or if the training loss is below $10^{-4}$. We choose stepsize $\eta = 0.075$.}
\begin{itemize}
	\item[1.] {\bf Even partition:} The dataset is partitioned evenly to all nodes, resulting in i.i.d. local data distribution. The simulation results for this regime are shown in Figure~\ref{fig:even_1},\ref{fig:even_2}.
	\item[2.] {\bf Pathological partition:} The dataset is partitioned by $17$ hyperplanes that are parallel to the initial hyperplane. Distances between adjacent hyperplanes are the same. Each node gets assigned one of the $16$ 'slices' of data points. This is a highly heterogeneous data partition since $15$ out of the $16$ nodes will have only one label. The simulation results for this regime are shown in Figure~ \ref{fig:patho_1},\ref{fig:patho_2}.
	\item[3.] {\bf Worst case partition:} All data points are assigned to one node. The other $15$ nodes have an empty dataset. This partition corresponds to the setting in Example~\ref{exp:1}. The simulation results for this regime are shown in Figure~\ref{fig:worst_1},\ref{fig:worst_2}.
\end{itemize}


{\bf An $\O({1}/T$) Convergence Rate:} {In general, we can clearly observe that the reciprocal of the training loss grows linearly with respect to the number of communication rounds, as well as a linear speedup of the convergence rate with the number of local steps in all three regimes. This implies an $\O(\frac{1}{KR}) = \O({1}/{T})$ convergence rate for Local SGD, validating our result in Theorem \ref{theo:convex}. The simulation results suggest that despite the $\O({K}/{T})$ worst-case upper bound, the optimistic $\O({1}/{T})$ convergence rate in Theorem \ref{theo:convex}, as well as the effectiveness of local steps, can generally be expected in practice. We also notice cases where Local SGD converges faster than the $\O({1}/{T})$, we remark that this is also reasonable since Theorem \ref{theo:convex} 
only provides a lower bound for the convergence rate of Loca SGD.}

{\bf Effect of Data Heterogeneity:} {While in general, Local SGD enjoys an $\O(\frac{1}{KR}) = \O({1}/{T})$ convergence rate, data heterogeneity is still a key issue and will cause the algorithm to become slower. Comparing the convergence rate of Local SGD under the three different partition regimes, especially the slow convergence of the worst case regime stands in contrast with the similar fast convergence of the other two regimes, we can see that having at least a little data similarity among different nodes is crucial for the convergence rate of the algorithm, as we predicted in Theorem \ref{theo:convex}.}

\begin{figure}[h]
	\centering
	\begin{subfigure}{0.32\linewidth}
		\includegraphics[width=\linewidth]{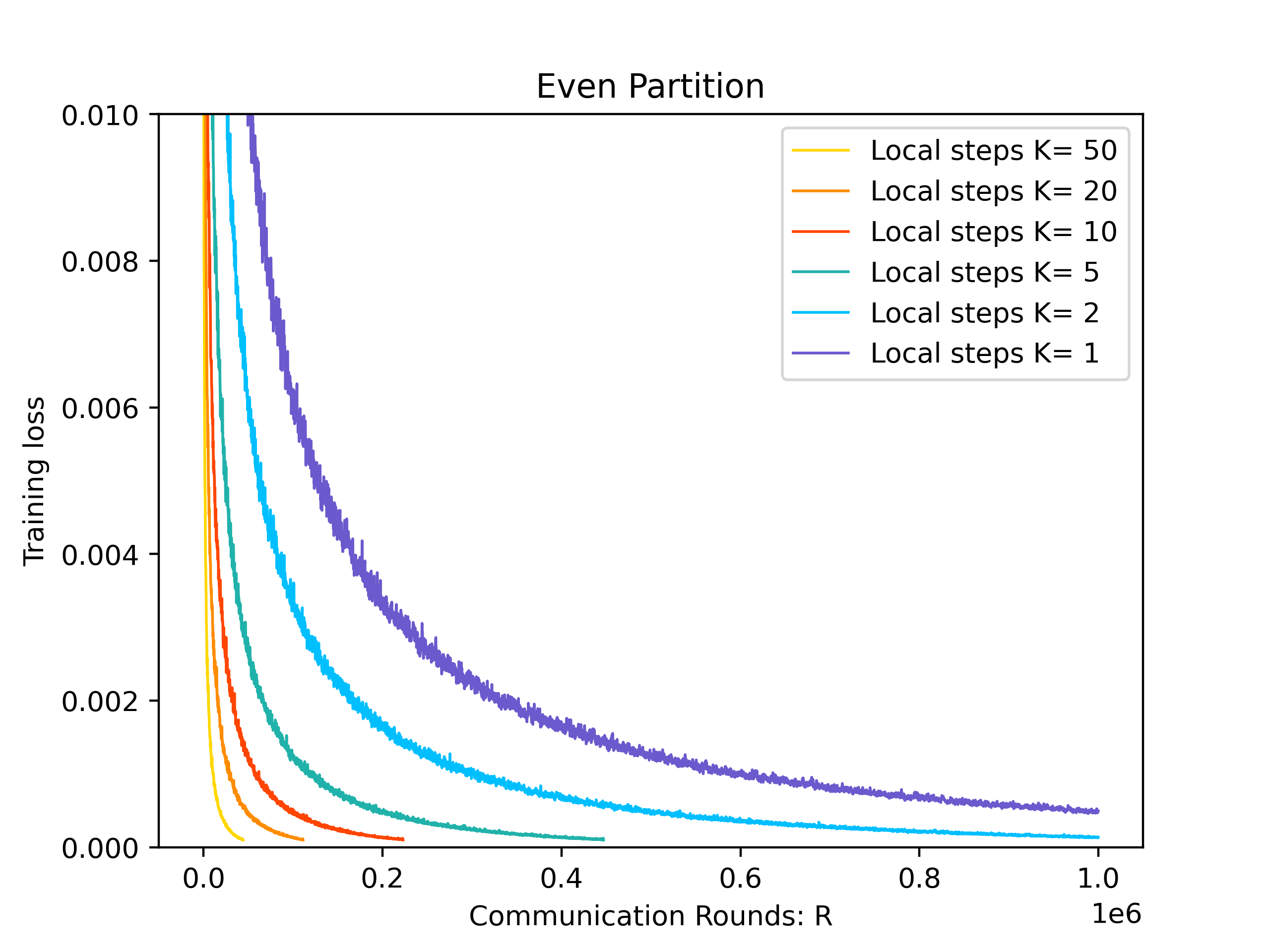}
		\caption{}
		\label{fig:even_2}
	\end{subfigure}
	\begin{subfigure}{0.32\linewidth}
		\includegraphics[width=\linewidth]{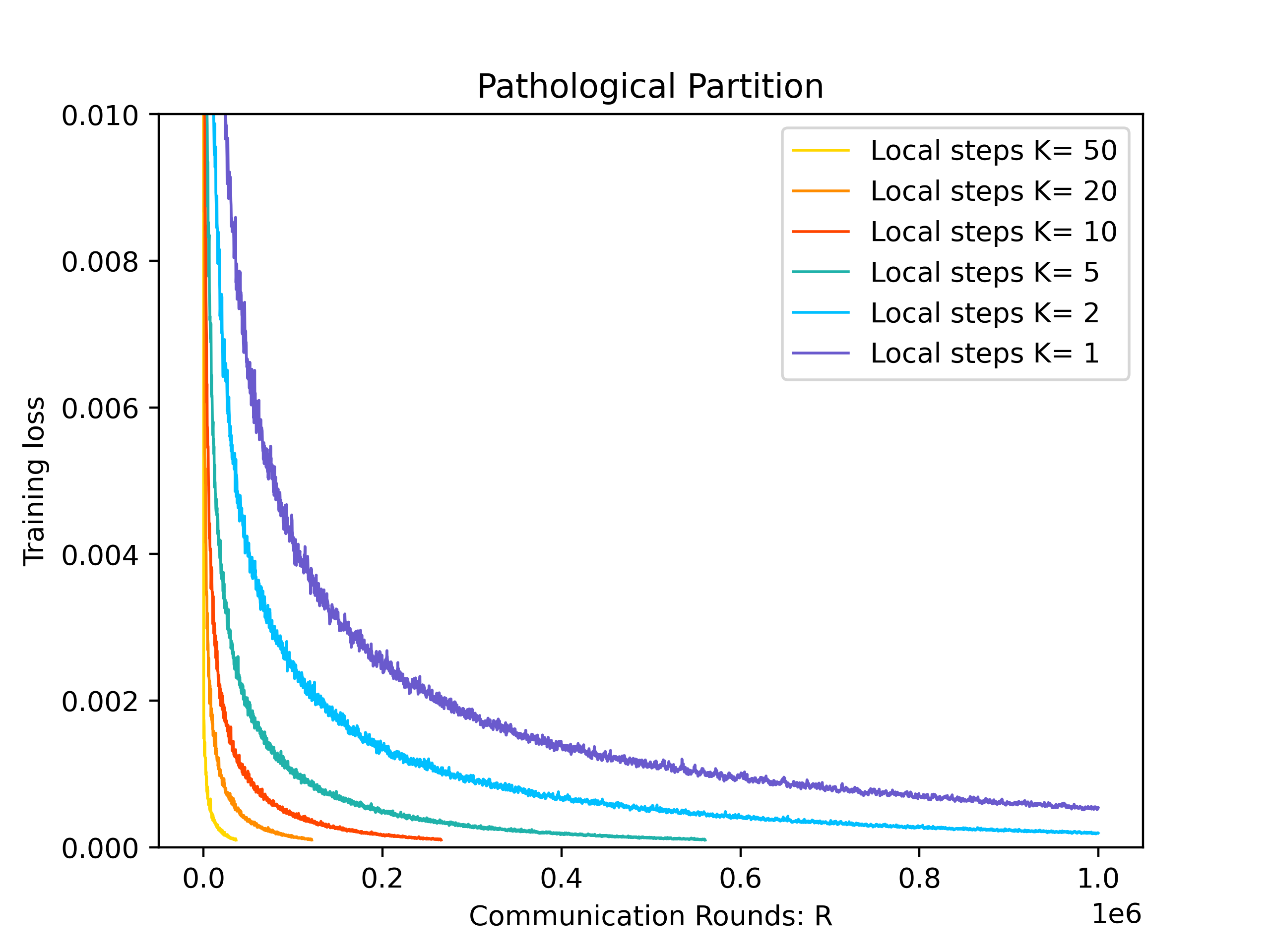}
		\caption{}
		\label{fig:patho_2}
	\end{subfigure}
	\begin{subfigure}{0.32\linewidth}
		\includegraphics[width=\linewidth]{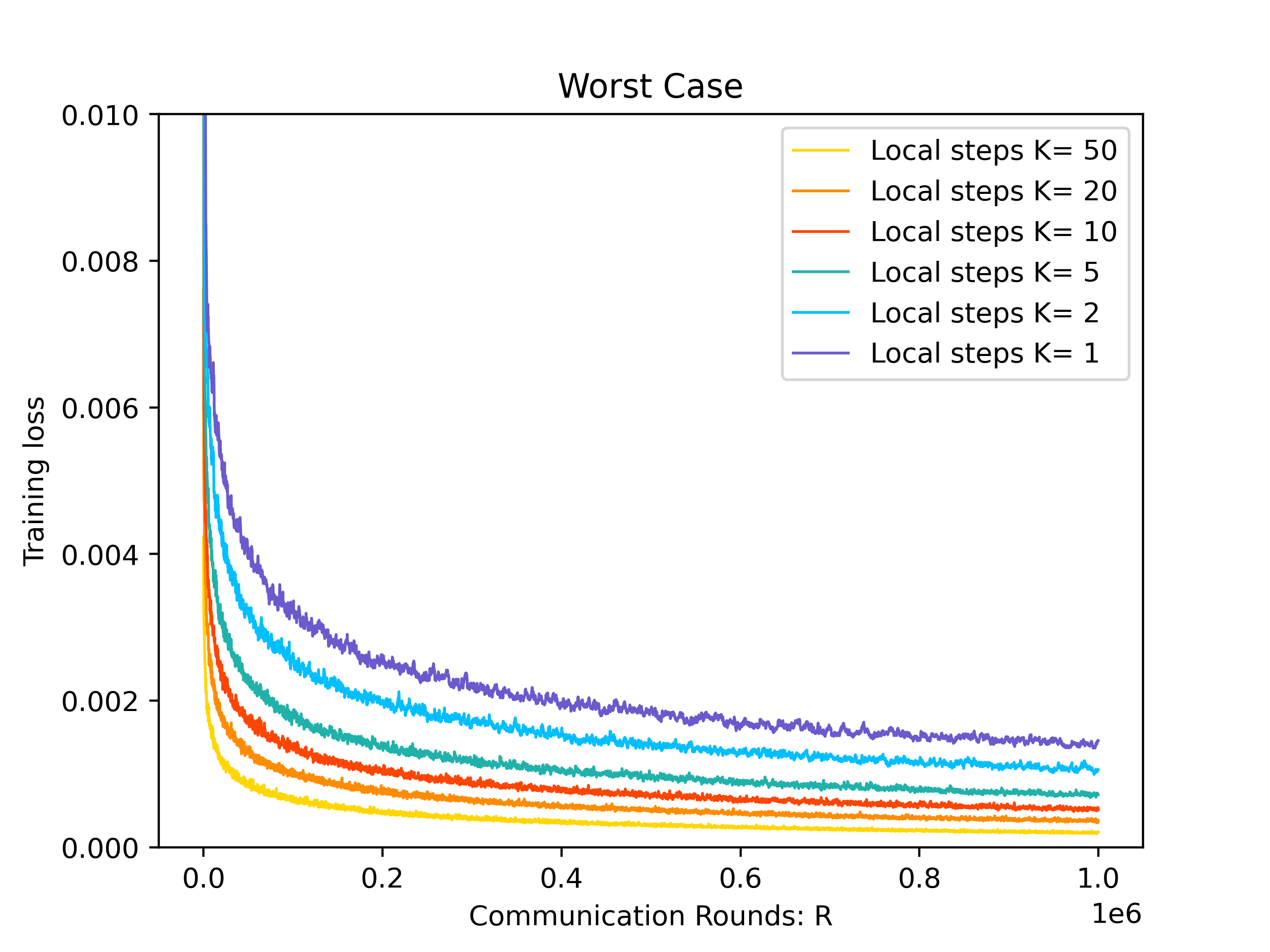}
		\caption{}
		\label{fig:worst_2}
	\end{subfigure}
	\caption{{Training loss vs. communication rounds with different local steps under the three data partition regimes. \ref{fig:even_2}: even partition regime. \ref{fig:patho_2}: pathological partition regime. \ref{fig:worst_2}: worst case partition regime.}}
\end{figure}

\begin{figure}[h]
	\centering
	\begin{subfigure}{0.32\linewidth}
		\includegraphics[width=\linewidth]{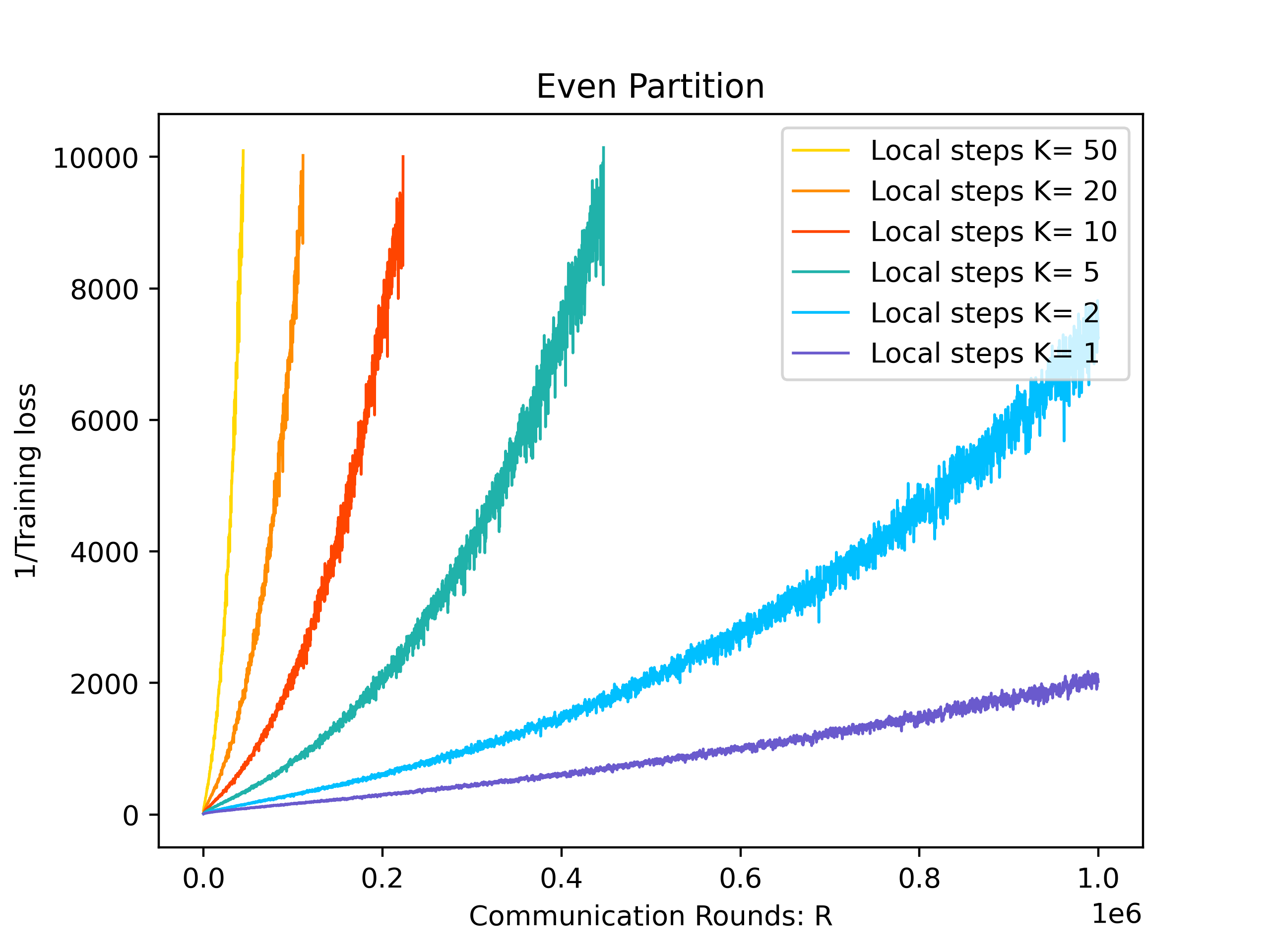}
		\caption{}
		\label{fig:even_1}
	\end{subfigure}
	\begin{subfigure}{0.32\linewidth}
		\includegraphics[width=\linewidth]{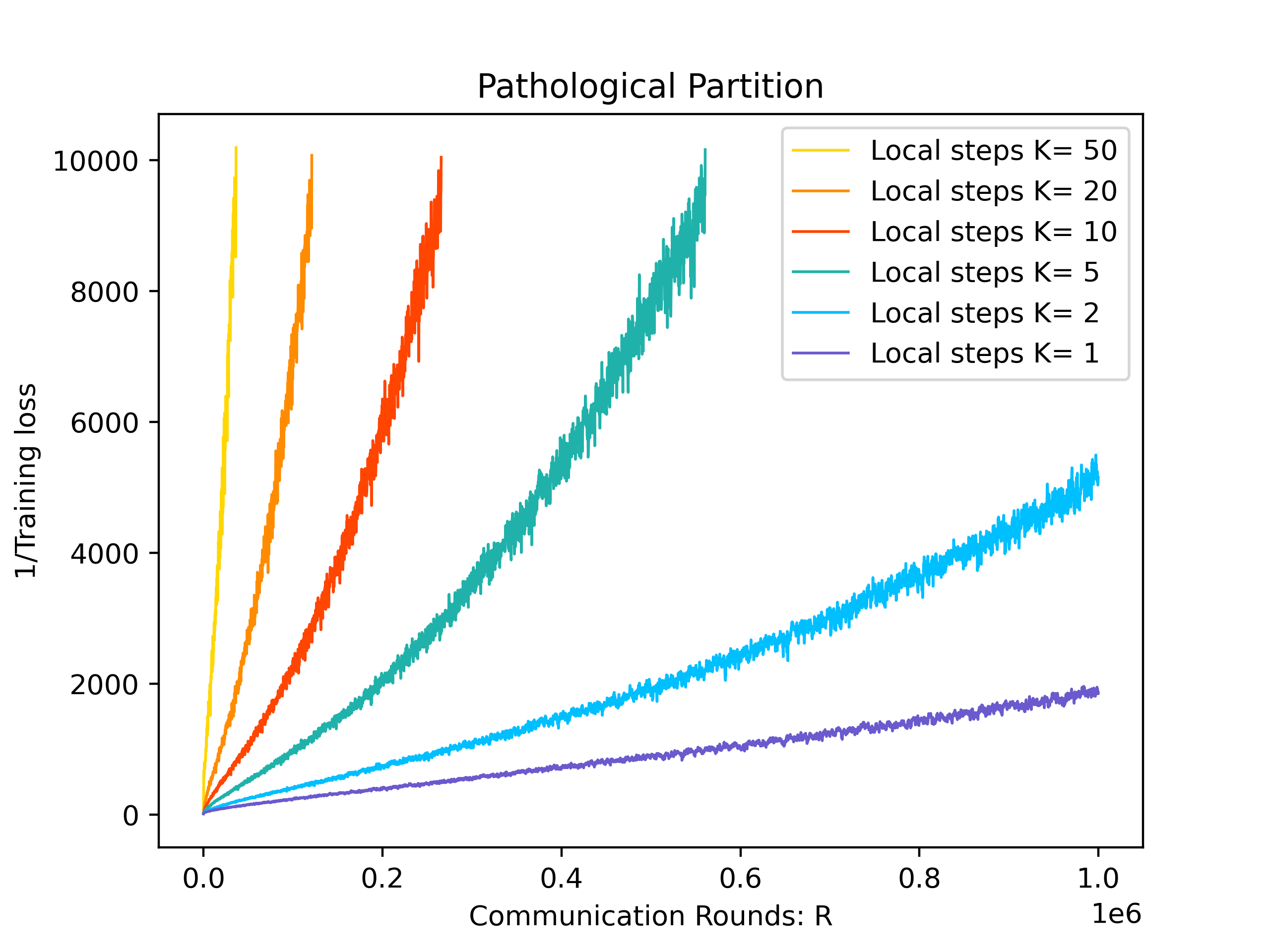}
		\caption{}
		\label{fig:patho_1}
	\end{subfigure}
	\begin{subfigure}{0.32\linewidth}
		\includegraphics[width=\linewidth]{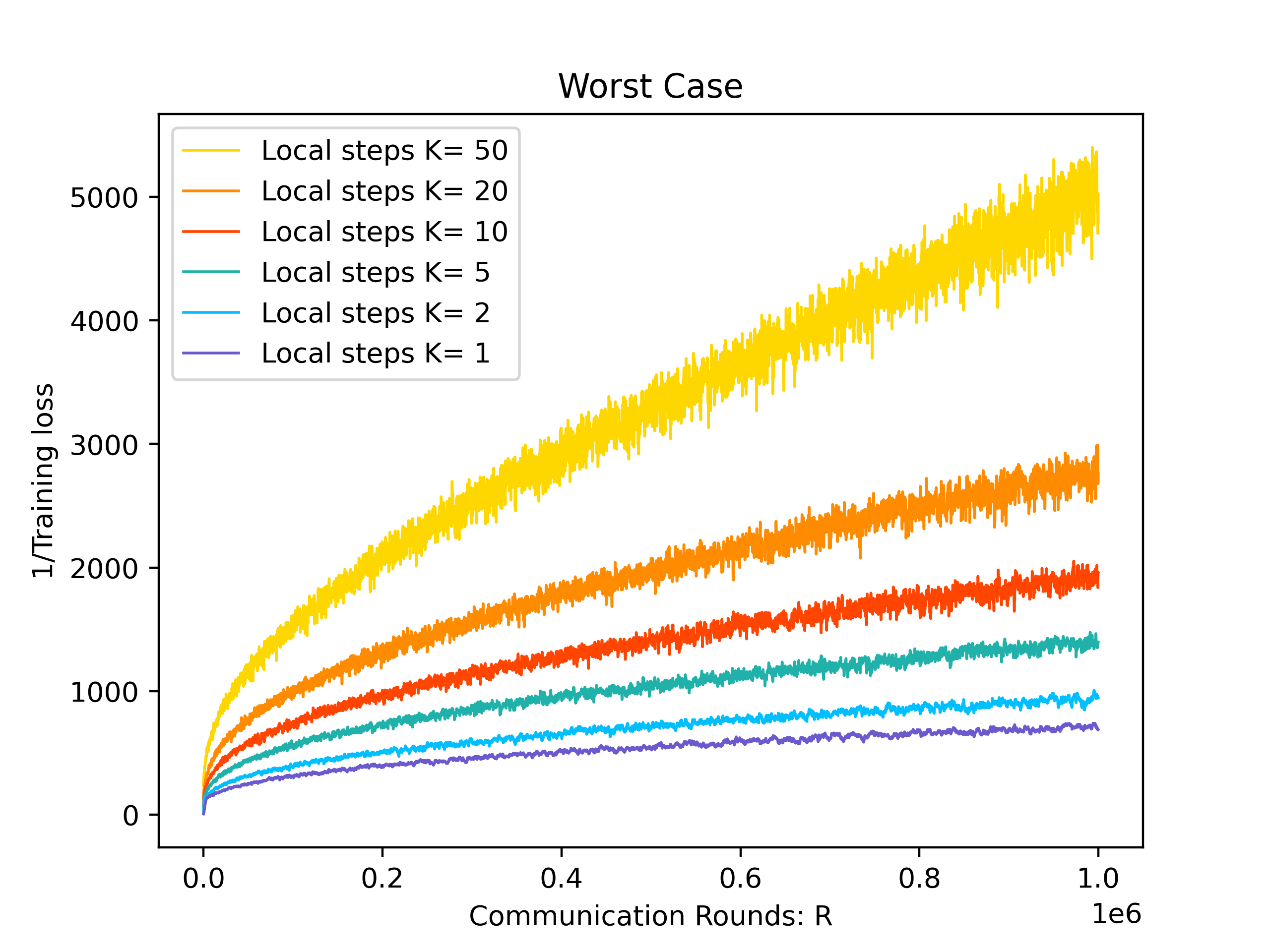}
		\caption{}
		\label{fig:worst_1}
	\end{subfigure}
	\caption{{1/(Training loss) vs. communication rounds with different local steps under the three data partition regimes. \ref{fig:even_1}: even partition regime. \ref{fig:patho_1}: pathological partition regime. \ref{fig:worst_1}: worst case partition regime.}}
\end{figure}

\section{Conclusion}
\label{sec:conclusion}
We studied the theoretical convergence guarantees of Local SGD for training over-parameterized models in the heterogeneous data setting and established tight convergence rates for strongly-convex, convex and non-convex loss functions. Moreover, we validated the effectiveness of local steps in speeding up the convergence of Local SGD in various settings both theoretically and using extensive simulations. {However, our theoretical results fall short of explaining the effectiveness of local steps in the later phase of training non-convex over-parameterized Neural Networks, as observed in our experiments. We leave this important issue as a future research direction.} Our results partially explain the fast convergence of Local SGD (especially when compared to Minibatch SGD) when training large-scale deep learning models. 

As future work,  one interesting direction would be to generalize our results to the partial node participation setting, which is practical in federated learning. Another interesting direction would be to further study and quantify the two-phase convergence phenomenon of Local SGD when training large-scale neural networks, as we discussed in Section \ref{subsec:cifar}. This may need combining the interpolation assumption with the special architectures of neural networks (see, e.g.,~\citep{allen2019convergence,du2019gradient}).



\bibliography{references}

\begin{thebibliography}{39}
\providecommand{\natexlab}[1]{#1}
\providecommand{\url}[1]{\texttt{#1}}
\expandafter\ifx\csname urlstyle\endcsname\relax
  \providecommand{\doi}[1]{doi: #1}\else
  \providecommand{\doi}{doi: \begingroup \urlstyle{rm}\Url}\fi

\bibitem[Allen-Zhu et~al.(2019)Allen-Zhu, Li, and Song]{allen2019convergence}
Zeyuan Allen-Zhu, Yuanzhi Li, and Zhao Song.
\newblock A convergence theory for deep learning via over-parameterization.
\newblock In \emph{International Conference on Machine Learning}, pp.\
  242--252. PMLR, 2019.

\bibitem[Ba et~al.(2016)Ba, Kiros, and Hinton]{ba2016layer}
Jimmy~Lei Ba, Jamie~Ryan Kiros, and Geoffrey~E Hinton.
\newblock Layer normalization.
\newblock \emph{arXiv preprint arXiv:1607.06450}, 2016.

\bibitem[Bubeck(2014)]{bubeck2014convex}
S{\'e}bastien Bubeck.
\newblock Convex optimization: Algorithms and complexity.
\newblock \emph{arXiv preprint arXiv:1405.4980}, 2014.

\bibitem[Chaudhari et~al.(2019)Chaudhari, Choromanska, Soatto, LeCun, Baldassi,
  Borgs, Chayes, Sagun, and Zecchina]{chaudhari2019entropy}
Pratik Chaudhari, Anna Choromanska, Stefano Soatto, Yann LeCun, Carlo Baldassi,
  Christian Borgs, Jennifer Chayes, Levent Sagun, and Riccardo Zecchina.
\newblock Entropy-sgd: Biasing gradient descent into wide valleys.
\newblock \emph{Journal of Statistical Mechanics: Theory and Experiment},
  2019\penalty0 (12):\penalty0 124018, 2019.

\bibitem[Cotter et~al.(2011)Cotter, Shamir, Srebro, and
  Sridharan]{cotter2011better}
Andrew Cotter, Ohad Shamir, Nathan Srebro, and Karthik Sridharan.
\newblock Better mini-batch algorithms via accelerated gradient methods.
\newblock \emph{arXiv preprint arXiv:1106.4574}, 2011.

\bibitem[Dekel et~al.(2012)Dekel, Gilad-Bachrach, Shamir, and
  Xiao]{dekel2012optimal}
Ofer Dekel, Ran Gilad-Bachrach, Ohad Shamir, and Lin Xiao.
\newblock Optimal distributed online prediction using mini-batches.
\newblock \emph{Journal of Machine Learning Research}, 13\penalty0 (1), 2012.

\bibitem[Deng et~al.(2022)Deng, Kamani, and Mahdavi]{deng2022local}
Yuyang Deng, Mohammad~Mahdi Kamani, and Mehrdad Mahdavi.
\newblock Local sgd optimizes overparameterized neural networks in polynomial
  time.
\newblock In \emph{International Conference on Artificial Intelligence and
  Statistics}, pp.\  6840--6861. PMLR, 2022.

\bibitem[Du et~al.(2019)Du, Lee, Li, Wang, and Zhai]{du2019gradient}
Simon Du, Jason Lee, Haochuan Li, Liwei Wang, and Xiyu Zhai.
\newblock Gradient descent finds global minima of deep neural networks.
\newblock In \emph{International Conference on Machine Learning}, pp.\
  1675--1685. PMLR, 2019.

\bibitem[Glasgow et~al.(2022)Glasgow, Yuan, and Ma]{glasgow2022sharp}
Margalit~R Glasgow, Honglin Yuan, and Tengyu Ma.
\newblock Sharp bounds for federated averaging (local sgd) and continuous
  perspective.
\newblock In \emph{International Conference on Artificial Intelligence and
  Statistics}, pp.\  9050--9090. PMLR, 2022.

\bibitem[Gorbunov et~al.(2021)Gorbunov, Hanzely, and
  Richt{\'a}rik]{gorbunov2021local}
Eduard Gorbunov, Filip Hanzely, and Peter Richt{\'a}rik.
\newblock Local sgd: Unified theory and new efficient methods.
\newblock In \emph{International Conference on Artificial Intelligence and
  Statistics}, pp.\  3556--3564. PMLR, 2021.

\bibitem[Haddadpour \& Mahdavi(2019)Haddadpour and
  Mahdavi]{haddadpour2019convergence}
Farzin Haddadpour and Mehrdad Mahdavi.
\newblock On the convergence of local descent methods in federated learning.
\newblock \emph{arXiv preprint arXiv:1910.14425}, 2019.

\bibitem[Hard et~al.(2018)Hard, Rao, Mathews, Ramaswamy, Beaufays, Augenstein,
  Eichner, Kiddon, and Ramage]{hard2018federated}
Andrew Hard, Kanishka Rao, Rajiv Mathews, Swaroop Ramaswamy, Fran{\c{c}}oise
  Beaufays, Sean Augenstein, Hubert Eichner, Chlo{\'e} Kiddon, and Daniel
  Ramage.
\newblock Federated learning for mobile keyboard prediction.
\newblock \emph{arXiv preprint arXiv:1811.03604}, 2018.

\bibitem[He et~al.(2016)He, Zhang, Ren, and Sun]{he2016deep}
Kaiming He, Xiangyu Zhang, Shaoqing Ren, and Jian Sun.
\newblock Deep residual learning for image recognition.
\newblock In \emph{Proceedings of the IEEE conference on computer vision and
  pattern recognition}, pp.\  770--778, 2016.

\bibitem[Hsieh et~al.(2020)Hsieh, Phanishayee, Mutlu, and
  Gibbons]{hsieh2020non}
Kevin Hsieh, Amar Phanishayee, Onur Mutlu, and Phillip Gibbons.
\newblock The non-iid data quagmire of decentralized machine learning.
\newblock In \emph{International Conference on Machine Learning}, pp.\
  4387--4398. PMLR, 2020.

\bibitem[Huang et~al.(2021)Huang, Li, Song, and Yang]{huang2021fl}
Baihe Huang, Xiaoxiao Li, Zhao Song, and Xin Yang.
\newblock Fl-ntk: A neural tangent kernel-based framework for federated
  learning analysis.
\newblock In \emph{International Conference on Machine Learning}, pp.\
  4423--4434. PMLR, 2021.

\bibitem[Karimireddy et~al.(2020)Karimireddy, Kale, Mohri, Reddi, Stich, and
  Suresh]{karimireddy2020scaffold}
Sai~Praneeth Karimireddy, Satyen Kale, Mehryar Mohri, Sashank Reddi, Sebastian
  Stich, and Ananda~Theertha Suresh.
\newblock Scaffold: Stochastic controlled averaging for federated learning.
\newblock In \emph{International Conference on Machine Learning}, pp.\
  5132--5143. PMLR, 2020.

\bibitem[Khaled et~al.(2020)Khaled, Mishchenko, and
  Richt{\'a}rik]{khaled2020tighter}
Ahmed Khaled, Konstantin Mishchenko, and Peter Richt{\'a}rik.
\newblock Tighter theory for local sgd on identical and heterogeneous data.
\newblock In \emph{International Conference on Artificial Intelligence and
  Statistics}, pp.\  4519--4529. PMLR, 2020.

\bibitem[Koloskova et~al.(2020)Koloskova, Loizou, Boreiri, Jaggi, and
  Stich]{koloskova2020unified}
Anastasia Koloskova, Nicolas Loizou, Sadra Boreiri, Martin Jaggi, and Sebastian
  Stich.
\newblock A unified theory of decentralized sgd with changing topology and
  local updates.
\newblock In \emph{International Conference on Machine Learning}, pp.\
  5381--5393. PMLR, 2020.

\bibitem[Krizhevsky et~al.(2009)Krizhevsky, Hinton,
  et~al.]{krizhevsky2009learning}
Alex Krizhevsky, Geoffrey Hinton, et~al.
\newblock Learning multiple layers of features from tiny images.
\newblock 2009.

\bibitem[Li et~al.(2020)Li, Sahu, Talwalkar, and Smith]{li2020federated}
Tian Li, Anit~Kumar Sahu, Ameet Talwalkar, and Virginia Smith.
\newblock Federated learning: Challenges, methods, and future directions.
\newblock \emph{IEEE Signal Processing Magazine}, 37\penalty0 (3):\penalty0
  50--60, 2020.

\bibitem[Li et~al.(2019)Li, Huang, Yang, Wang, and Zhang]{li2019convergence}
Xiang Li, Kaixuan Huang, Wenhao Yang, Shusen Wang, and Zhihua Zhang.
\newblock On the convergence of fedavg on non-iid data.
\newblock \emph{arXiv preprint arXiv:1907.02189}, 2019.

\bibitem[Li et~al.(2022)Li, Song, Tao, and Zhang]{li2022convergence}
Xiaoxiao Li, Zhao Song, Runzhou Tao, and Guangyi Zhang.
\newblock A convergence theory for federated average: Beyond smoothness.
\newblock \emph{arXiv preprint arXiv:2211.01588}, 2022.

\bibitem[Lin et~al.(2017)Lin, Han, Mao, Wang, and Dally]{lin2017deep}
Yujun Lin, Song Han, Huizi Mao, Yu~Wang, and William~J Dally.
\newblock Deep gradient compression: Reducing the communication bandwidth for
  distributed training.
\newblock \emph{arXiv preprint arXiv:1712.01887}, 2017.

\bibitem[Ma et~al.(2018)Ma, Bassily, and Belkin]{ma2018power}
Siyuan Ma, Raef Bassily, and Mikhail Belkin.
\newblock The power of interpolation: Understanding the effectiveness of sgd in
  modern over-parametrized learning.
\newblock In \emph{International Conference on Machine Learning}, pp.\
  3325--3334. PMLR, 2018.

\bibitem[Mangasarian(1995)]{mangasarian1995parallel}
LO~Mangasarian.
\newblock Parallel gradient distribution in unconstrained optimization.
\newblock \emph{SIAM Journal on Control and Optimization}, 33\penalty0
  (6):\penalty0 1916--1925, 1995.

\bibitem[Maralappanavar et~al.(2022)Maralappanavar, Khanduri, and
  Bharath]{maralappanavar2022linear}
Shruti~P Maralappanavar, Prashant Khanduri, and BN~Bharath.
\newblock Linear convergence of decentralized fedavg for non-convex objectives:
  The interpolation regime.
\newblock 2022.

\bibitem[McMahan et~al.(2017)McMahan, Moore, Ramage, Hampson, and
  y~Arcas]{mcmahan2017communication}
Brendan McMahan, Eider Moore, Daniel Ramage, Seth Hampson, and Blaise~Aguera
  y~Arcas.
\newblock Communication-efficient learning of deep networks from decentralized
  data.
\newblock In \emph{Artificial intelligence and statistics}, pp.\  1273--1282.
  PMLR, 2017.

\bibitem[Nesterov et~al.(2018)]{nesterov2018lectures}
Yurii Nesterov et~al.
\newblock \emph{Lectures on convex optimization}, volume 137.
\newblock Springer, 2018.

\bibitem[Qin et~al.(2020)Qin, Etesami, and Uribe]{qin2020communication}
Tiancheng Qin, S~Rasoul Etesami, and C{\'e}sar~A Uribe.
\newblock Communication-efficient decentralized local sgd over undirected
  networks.
\newblock \emph{arXiv preprint arXiv:2011.03255}, 2020.

\bibitem[Qu et~al.(2020)Qu, Lin, Kalagnanam, Li, Zhou, and
  Zhou]{qu2020federated}
Zhaonan Qu, Kaixiang Lin, Jayant Kalagnanam, Zhaojian Li, Jiayu Zhou, and
  Zhengyuan Zhou.
\newblock Federated learning's blessing: Fedavg has linear speedup.
\newblock \emph{arXiv preprint arXiv:2007.05690}, 2020.

\bibitem[Stich(2018)]{stich2018local}
Sebastian~U Stich.
\newblock Local sgd converges fast and communicates little.
\newblock \emph{arXiv preprint arXiv:1805.09767}, 2018.

\bibitem[Vaswani et~al.(2019)Vaswani, Bach, and Schmidt]{vaswani2019fast}
Sharan Vaswani, Francis Bach, and Mark Schmidt.
\newblock Fast and faster convergence of sgd for over-parameterized models and
  an accelerated perceptron.
\newblock In \emph{The 22nd International Conference on Artificial Intelligence
  and Statistics}, pp.\  1195--1204. PMLR, 2019.

\bibitem[Woodworth et~al.(2020{\natexlab{a}})Woodworth, Patel, and
  Srebro]{woodworth2020minibatch}
Blake Woodworth, Kumar~Kshitij Patel, and Nathan Srebro.
\newblock Minibatch vs local sgd for heterogeneous distributed learning.
\newblock \emph{arXiv preprint arXiv:2006.04735}, 2020{\natexlab{a}}.

\bibitem[Woodworth et~al.(2020{\natexlab{b}})Woodworth, Patel, Stich, Dai,
  Bullins, Mcmahan, Shamir, and Srebro]{woodworth2020local}
Blake Woodworth, Kumar~Kshitij Patel, Sebastian Stich, Zhen Dai, Brian Bullins,
  Brendan Mcmahan, Ohad Shamir, and Nathan Srebro.
\newblock Is local sgd better than minibatch sgd?
\newblock In \emph{International Conference on Machine Learning}, pp.\
  10334--10343. PMLR, 2020{\natexlab{b}}.

\bibitem[Yang et~al.(2021)Yang, Fang, and Liu]{yang2021achieving}
Haibo Yang, Minghong Fang, and Jia Liu.
\newblock Achieving linear speedup with partial worker participation in non-iid
  federated learning.
\newblock \emph{arXiv preprint arXiv:2101.11203}, 2021.

\bibitem[Yu et~al.(2019)Yu, Yang, and Zhu]{yu2019parallel}
Hao Yu, Sen Yang, and Shenghuo Zhu.
\newblock Parallel restarted sgd with faster convergence and less
  communication: Demystifying why model averaging works for deep learning.
\newblock In \emph{Proceedings of the AAAI Conference on Artificial
  Intelligence}, volume~33, pp.\  5693--5700, 2019.

\bibitem[Zhang \& Li(2021)Zhang and Li]{zhang2021distributed}
Chi Zhang and Qianxiao Li.
\newblock Distributed optimization for degenerate loss functions arising from
  over-parameterization.
\newblock \emph{Artificial Intelligence}, 301:\penalty0 103575, 2021.

\bibitem[Zhang et~al.(2021)Zhang, Bengio, Hardt, Recht, and
  Vinyals]{zhang2021understanding}
Chiyuan Zhang, Samy Bengio, Moritz Hardt, Benjamin Recht, and Oriol Vinyals.
\newblock Understanding deep learning (still) requires rethinking
  generalization.
\newblock \emph{Communications of the ACM}, 64\penalty0 (3):\penalty0 107--115,
  2021.

\bibitem[Zhang et~al.(2017)Zhang, Li, Kara, Alistarh, Liu, and
  Zhang]{zhang2017zipml}
Hantian Zhang, Jerry Li, Kaan Kara, Dan Alistarh, Ji~Liu, and Ce~Zhang.
\newblock Zipml: Training linear models with end-to-end low precision, and a
  little bit of deep learning.
\newblock In \emph{International Conference on Machine Learning}, pp.\
  4035--4043. PMLR, 2017.

\end{thebibliography}
\bibliographystyle{tmlr}
\newpage
\appendix

\section{Proof of Theorems}
\label{sec:proofs}

Define $\bbg^{(t)} \coloneqq \frac{1}{n}\sum_{i = 1}^n \nabla f_i(\bx_i^{(t)},\xi_i^{(t)} )$ as the average of the stochastic gradients evaluated at all nodes, and $r_t \coloneqq \E\|\bbx^{(t)} -\bx^*\|^2$ as the expected distance to the optimum solution. 

\subsection{Preliminary Propositions}
\begin{proposition}
	Let $f:\R^d\rightarrow\R$ be an $L$-smooth and convex function and $\bx^* \in \argmin_{x\in \R^d } f(\xx)$. Then, 
	\begin{align}\label{eq:gradientnorm_optimalitygap}
		\frac{1}{2L}\|\nabla f(\bx)\|^2\le f(\bx)-f(\bx^*)
	\end{align}
\end{proposition}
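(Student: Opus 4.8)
The plan is to exploit the $L$-smoothness of $f$ through its associated quadratic upper bound (the descent lemma) and then to specialize that bound to a single gradient step taken from $\bx$. First I would record the inequality implied by $L$-smoothness: for every $\bx,\yy\in\R^d$,
\[
f(\yy)\le f(\bx)+\langle\nabla f(\bx),\yy-\bx\rangle+\frac{L}{2}\|\yy-\bx\|^2.
\]
If the paper takes $L$-smoothness to mean the Lipschitz-gradient condition $\|\nabla f(\bx)-\nabla f(\yy)\|\le L\|\bx-\yy\|$ rather than this inequality directly, then this step requires integrating $\nabla f$ along the segment joining $\bx$ and $\yy$ and applying Cauchy--Schwarz; this is the only place where any computation is hidden, and it is entirely standard.

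The key move is to evaluate the descent lemma at $\yy=\bx-\frac{1}{L}\nabla f(\bx)$, the point that minimizes its right-hand side. Substituting this choice collapses the first-order and second-order terms into a single negative contribution:
\[
f\!\left(\bx-\tfrac{1}{L}\nabla f(\bx)\right)\le f(\bx)-\frac{1}{2L}\|\nabla f(\bx)\|^2.
\]

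Finally I would invoke the global optimality of $\bx^*$: since $\bx^*\in\argmin_{x\in\R^d}f(\bx)$, we have $f(\bx^*)\le f\!\left(\bx-\frac{1}{L}\nabla f(\bx)\right)$, and chaining this with the previous display gives $f(\bx^*)\le f(\bx)-\frac{1}{2L}\|\nabla f(\bx)\|^2$, which rearranges to the claimed bound. I do not expect any genuine obstacle here: once the quadratic upper bound is in hand, the result is a two-line consequence, so the only care needed is to establish that upper bound from whatever definition of $L$-smoothness is in force. Note also that no convexity is used anywhere in the argument, consistent with the fact that this proposition is applied in the non-convex analysis of Theorem~\ref{theo:non_convex}.
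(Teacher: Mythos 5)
Your proof is correct and is the canonical argument: the paper states this proposition without proof (it is a standard fact), and your derivation—descent lemma evaluated at $\yy=\bx-\tfrac{1}{L}\nabla f(\bx)$, then global optimality of $\bx^*$—supplies exactly the omitted reasoning. One small correction to your closing remark: the paper invokes \eqref{eq:gradientnorm_optimalitygap} in the proof of Lemma~\ref{lem:localprogress_sg} (the strongly convex analysis), applied to the per-sample losses $f_i(\cdot,\xi_i)$ whose minimizer is $\bx^*$ by the interpolation assumption, rather than in the non-convex analysis of Theorem~\ref{theo:non_convex}; your observation that no convexity is needed is nonetheless correct and is what makes the bound applicable wherever a common minimizer exists.
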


\begin{proposition}\label{prop:meandistance}
	Let $\bbx=\frac{1}{n}\sum_{i = 1}^n\bx_i$. For any $\bx'\in \R^d$, we have
	\begin{align}
		\sum_{i = 1}^n\|\bx_i-\bx'\|^2 = \sum_{i = 1}^n\|\bx_i-\bbx\|^2+n\|\bbx-\bx'\|^2.
		\label{eq:average}
	\end{align}
	As a consequence, we have the following inequalities:
	\begin{align}
		\sum_{i = 1}^n\|\bx_i-\bbx\|^2\le \sum_{i = 1}^n\|\bx_i\|^2 ,
		\label{eq:averageieq}
	\end{align}
	\begin{align}
		\|\bbx-\bx'\|^2\le \frac{1}{n}\sum_{i = 1}^n\|\bx_i-\bx'\|^2.
		\label{eq:meandistance}
	\end{align}
\end{proposition}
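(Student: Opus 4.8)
The plan is to prove the exact decomposition~\eqref{eq:average} first, since both inequalities~\eqref{eq:averageieq} and~\eqref{eq:meandistance} then drop out immediately by discarding a single nonnegative term. The identity is the familiar bias--variance (parallel-axis) decomposition: the total spread of the points $\bx_i$ about an arbitrary reference $\bx'$ splits into their spread about their own centroid $\bbx$ plus the displacement of the centroid from $\bx'$.

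First I would center each summand at the mean by writing $\bx_i - \bx' = (\bx_i - \bbx) + (\bbx - \bx')$ and expanding the squared norm, which produces $\|\bx_i - \bbx\|^2$, a cross term $2\langle \bx_i - \bbx,\ \bbx - \bx'\rangle$, and $\|\bbx - \bx'\|^2$. Summing over $i \in [n]$, the pure-squared terms contribute $\sum_{i=1}^n\|\bx_i - \bbx\|^2$ and $n\|\bbx - \bx'\|^2$, respectively. The only point that needs an observation is that the cross term vanishes: since $\bbx - \bx'$ does not depend on $i$, it factors out of the sum, and $\sum_{i=1}^n (\bx_i - \bbx) = \big(\sum_{i=1}^n \bx_i\big) - n\bbx = \mathbf{0}$ by the definition of $\bbx$. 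This establishes~\eqref{eq:average}.

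With the identity in hand, \eqref{eq:averageieq} follows by setting $\bx' = \mathbf{0}$: the left side becomes $\sum_{i=1}^n\|\bx_i\|^2$, and since the remaining term $n\|\bbx\|^2$ is nonnegative, it may be dropped. Likewise, \eqref{eq:meandistance} follows by keeping only the $n\|\bbx - \bx'\|^2$ term on the right of~\eqref{eq:average}, discarding the nonnegative $\sum_{i=1}^n\|\bx_i - \bbx\|^2$, and dividing by $n$. There is no genuine obstacle beyond recognizing the cross-term cancellation; the rest is bookkeeping, and it is precisely the nonnegativity of the dropped terms that turns both consequences into inequalities rather than equalities.
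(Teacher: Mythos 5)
Your proof is correct and complete: the centering decomposition with the vanishing cross term $\sum_{i=1}^n(\bx_i-\bbx)=\mathbf{0}$ establishes \eqref{eq:average}, and both \eqref{eq:averageieq} and \eqref{eq:meandistance} follow by dropping a nonnegative term exactly as you describe. The paper states this proposition without proof as a standard fact, and your argument is precisely the standard one it implicitly relies on.
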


\subsection{Proof of Theorem \ref{theo:convex}}
Let $\bx^* \in \argmin_{x\in \R^d } f(\bx)$. From Assumptions \ref{assum:interpolation} {(Interpolation)} and \ref{assum:convex}, we have $\nabla f_i(\xx^\star,\xi_i)=0$, which implies $$\bx^* \in \argmin_{x\in \R^d } f_i(\bx,\xi_i), \ \forall i\in [n],\ \xi_i\in \Omega_i$$.

We first bound the progress made by local variables $\bx_i$ in one local SGD update as follows:
\begin{lemma}\label{lem:localprogress_g}
	Let Assumption \ref{assum:general}, Assumption \ref{assum:convex} and Assumption \ref{assum:interpolation} {(Interpolation)} hold with $\mu =0$. If we follow Algorithm \ref{alg:1} with stepsize $\eta \le \frac{1}{2L}$, we will have
	\begin{align}\label{eq:localprogress_g}
		\E_{\xi_i^t}\|\bx_i^{t+\frac{1}{2}}-\bx^*\|^2\le \|\bx_i^{t}-\bx^*\|^2-\eta(f_i(\bx_i^t)-f_i(\bx^*))
	\end{align}
\end{lemma}
\begin{proof}
	\begin{align*}
		&\ \quad \E_{\xi_i^t}\|\bx_i^{t+\frac{1}{2}}-\bx^*\|^2\\
		&=\E_{\xi_i^t}\|\bx_i^{t}-\bx^*-\eta \nabla f_i(\bx_i^{t},\xi_i^t)\|^2\\
		&=\|\bx_i^{t}\!-\!\bx^*\|^2-2\eta\langle\bx_i^t\!-\!\bx^*,\nabla f_i(\bx_i^t)\rangle+\eta^2 \E_{\xi_i^t}\|\nabla f_i(\bx_i^t,\xi_i^t)\|^2\\
		&\stackrel{\eqref{eq:convexity}\eqref{eq:gradientnorm_optimalitygap}}{\le}\|\bx_i^{t}-\bx^*\|^2-2\eta(f_i(\bx_i)-f_i(\bx^*)+\frac{\mu }{2}\|\bx_i^{t}-\bx^*\|^2)\\
		&+\eta^2\E_{\xi_i^t}[2L(f_i(\bx_i^t,\xi_i^t)-f_i(\bx^*,\xi_i^t))]\\
		&=(1-\eta\mu)\|\bx_i^{t}-\bx^*\|^2-(2\eta-2L\eta^2)(f_i(\bx_i)-f_i(\bx^*))\\
		&\stackrel{(\eta\le \frac{1}{2L},\mu=0)}{\le}\|\bx_i^{t}-\bx^*\|^2-\eta(f_i(\bx_i^t)-f_i(\bx^*)).
	\end{align*}
\end{proof}
Next, we bound the progress made by $\bbx$ in one communication round as follows:

\begin{lemma}\label{lem:oneround_convex}
	Let Assumption \ref{assum:general}, Assumption \ref{assum:convex} and Assumption \ref{assum:interpolation} {(Interpolation)} hold with $\mu =0$. Assume that the nodes follow Algorithm \ref{alg:1} with stepsize $\eta \le\frac{1}{2L}$, and let $w_t = 1$ if $t\in \I$ or $t+1\in \I$, and $w_t = c$ otherwise. Then, for $r=0,1,\ldots,R-1$, we have
	\begin{align*}
		\E\|\bbx^{(r\!+\!1)K}\!\!-\!\bx^*\|^2\!\le\! \E\|\bbx^{rK}\!\!-\!\bx^*\|^2\!-\!\eta\!\! \!\sum_{t = rK}^{(r\!+\!1)K\!-\!1}\!\!w_t\E[f(\bbx^t)\!-\!f^*].
	\end{align*}
\end{lemma}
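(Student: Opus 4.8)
The plan is to telescope the per-node one-step bound of Lemma \ref{lem:localprogress_g} across a single communication round $t=rK,\ldots,(r+1)K-1$, while tracking the averaged potential $Q_t\coloneqq \frac{1}{n}\sum_{i=1}^n\E\|\bx_i^t-\bx^*\|^2$. For an interior local step ($t+1\notin\I$) the iterates are not aggregated, so $Q_{t+1}=\frac{1}{n}\sum_i\E\|\bx_i^{t+\frac12}-\bx^*\|^2$ and Lemma \ref{lem:localprogress_g} gives directly $Q_{t+1}\le Q_t-\frac{1}{2L}\cdot\frac{1}{n}\sum_i\E[f_i(\bx_i^t)-f_i(\bx^*)]$. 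At a communication step ($t+1\in\I$) I would first pass the averaging through the squared norm via \eqref{eq:meandistance}, namely $\E\|\bbx^{(r+1)K}-\bx^*\|^2\le\frac{1}{n}\sum_i\E\|\bx_i^{(r+1)K-\frac12}-\bx^*\|^2$, and then apply Lemma \ref{lem:localprogress_g} to each summand, so the same one-step drop holds. Thus at every step the potential decreases by at least $\frac{1}{2L}\cdot\frac{1}{n}\sum_i\E[f_i(\bx_i^t)-f_i(\bx^*)]$, and what remains is to convert each per-node gap into the centralized gap $\E[f(\bbx^t)-f^*]$ carrying the correct weight $w_t$.

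The two regimes of $w_t$ reflect how tightly the local iterates are synchronized. For the first step, $t=rK\in\I$, the preceding aggregation forces $\bx_i^{rK}=\bbx^{rK}$ for all $i$, so $\frac{1}{n}\sum_i[f_i(\bbx^{rK})-f_i(\bx^*)]=f(\bbx^{rK})-f^*$ \emph{exactly}, giving $w_{rK}=1$. For a genuinely interior step the iterates have drifted apart and the only tool is Definition \ref{def:1}, which yields $\frac{1}{n}\sum_i[f_i(\bx_i^t)-f_i(\bx^*)]\ge c\,(f(\bbx^t)-f^*)$, i.e.\ weight $c$. Summing these telescopes $Q$ from $Q_{rK}=\E\|\bbx^{rK}-\bx^*\|^2$ down to $\E\|\bbx^{(r+1)K}-\bx^*\|^2$ and already produces the claimed inequality for all but the last step.

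The hard part will be recovering $w_{(r+1)K-1}=1$ for the last local step, where the iterates $\{\bx_i^{(r+1)K-1}\}$ are maximally drifted: a direct appeal to Definition \ref{def:1} only delivers the weaker weight $c$ and forfeits the factor-two speedup the theorem advertises when $c=0$. The route I would take is to analyze the last local step together with the immediately following averaging as one aggregated update $\bbx^{(r+1)K}=\bbx^{(r+1)K-1}-\eta\,\bbg^{(r+1)K-1}$, and to charge its progress against $f(\bbx^{(r+1)K-1})$ through convexity of $f$ at the averaged point rather than through the per-node surrogate. Concretely, I would expand $\E\|\bbx^{(r+1)K}-\bx^*\|^2$ about $\bbx^{(r+1)K-1}$, bound the aggregated gradient's second moment by $\frac{1}{n}\sum_i[f_i(\bx_i^{(r+1)K-1})-f_i(\bx^*)]$ using $L$-smoothness together with the interpolation identity $\nabla f_i(\bx^*)=0$, and then control the discrepancy between $\frac{1}{n}\sum_i\nabla f_i(\bx_i^{(r+1)K-1})$ and $\nabla f(\bbx^{(r+1)K-1})$ by the client drift. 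Making this last reabsorption exact, so that no residual drift term survives and the clean coefficient $1$ emerges, is the delicate step I expect to require the most care.

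Finally, I would chain the per-round inequality over $r=0,\ldots,R-1$, using that $Q_{rK}$ collapses to $\E\|\bbx^{rK}-\bx^*\|^2$ at each communication boundary; with the weight bookkeeping $W=\sum_t w_t=R\,(2+(K-2)c)$ this is precisely what feeds Jensen's inequality $f(\hat{\bx}^T)\le\frac{1}{W}\sum_t w_t f(\bbx^t)$ to yield the rate in Theorem \ref{theo:convex}.
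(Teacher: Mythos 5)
Your telescoping skeleton, the exact weight $1$ at $t=rK$, the weight-$c$ interior steps via Definition \ref{def:1}, and the bookkeeping $W=R\,(2+(K-2)c)$ all match the paper. The gap is precisely the step you flag as delicate, $w_{(r+1)K-1}=1$, and the route you sketch for it cannot be completed. Passing the averaging through the inequality \eqref{eq:meandistance} discards the gain from aggregation entirely, so your first two paragraphs only prove the lemma with weight $c$ at the last step of each round. Your proposed repair then expands $\E\|\bbx^{(r+1)K}-\bx^*\|^2$ about $\bbx^{(r+1)K-1}$ and bounds the second moment by Jensen, $\E\|\bbg\|^2\le\frac{1}{n}\sum_i\E\|\nabla f_i(\bx_i,\xi_i)\|^2\le \frac{2L}{n}\sum_i\E[f_i(\bx_i)-f_i(\bx^*)]$. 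That Jensen step is fatal: it turns the per-node gap $G\coloneqq\frac{1}{n}\sum_i\E[f_i(\bx_i)-f_i(\bx^*)]$ into a \emph{positive} term that must afterwards be re-absorbed into centralized quantities. With $V\coloneqq\frac{1}{n}\sum_i\E\|\bx_i-\bbx\|^2$, the cross term gives at best $-2\eta\,\E[f(\bbx)-f^*]+\eta LV$, and at $\eta=\frac{1}{2L}$ the claimed weight $1$ then requires $G\le \E[f(\bbx)-f^*]+LV$. This inequality is false in general: take $n=2$, $d=2$, $f_1(x,y)=\frac{L}{2}x^2$, $f_2(x,y)=\frac{L}{2}y^2$ (convex, $L$-smooth, interpolation at $\bx^*=(0,0)$), with $\bx_1=(u+\delta,u)$, $\bx_2=(u-\delta,u)$; it reduces to $2u\delta\le 3\delta^2$ and fails whenever $u>\frac{3}{2}\delta$. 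The best valid bound, $G\le 2\,\E[f(\bbx)-f^*]+LV$, makes all negative terms cancel exactly and yields weight $0$, not $1$ --- so no choice of constants rescues this path.

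The paper's proof differs in exactly one move, and it is the one that matters: at the aggregation step it uses the \emph{identity} \eqref{eq:average} rather than the inequality \eqref{eq:meandistance}, so the variance-reduction term $T_1=\frac{1}{n}\sum_i\E\|\bx_i^{(r+1)K-\frac{1}{2}}-\bbx^{(r+1)K}\|^2$ survives with a negative sign, and it is then lower-bounded as
\begin{align*}
T_1\ \ge\ \frac{1}{2L}\,\E\Big[f(\bbx^{(r+1)K-1})-\frac{1}{n}\sum_{i=1}^n f_i(\bx_i^{(r+1)K-1})\Big],
\end{align*}
by expanding $T_1$, dropping the quadratic gradient term, using $L$-smoothness, and noting $T_1\ge\max(0,\,\cdot\,)\ge\frac{1}{2}(\,\cdot\,)$ to get the constant. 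Adding this to the per-node gap $\frac{1}{2L}G$ already delivered by Lemma \ref{lem:localprogress_g} at $t=(r+1)K-1$ produces exactly $\frac{1}{2L}\E[f(\bbx^{(r+1)K-1})-f^*]$, since $\frac{1}{n}\sum_i f_i(\bx^*)=f^*$. Note that $G$ is never upper-bounded anywhere: the client drift at the last step is compensated, term for term, by the contraction of the averaging step, which grows with the drift. Your instinct to couple the last local step with the averaging is right, but the coupling must retain $T_1$ (equivalently, the part of $\frac{1}{n}\sum_i\E\|g_i-\bbg\|^2$ that Jensen throws away); once that term is discarded, the weight-$1$ claim is out of reach.
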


\begin{proof}
	\begin{align*}
		&\E\|\bbx^{(r+1)K}-\bx^*\|^2=\E\|\frac{1}{n}\sum_{i = 1}^n\bx_i^{(r+1)K-\frac{1}{2}}-\bx^*\|^2\\
		&\stackrel{\eqref{eq:average}}{=}\!\frac{1}{n}\!\sum_{i = 1}^n\E\|\bx_i^{(r\!+\!1)\!K\!-\!\frac{1}{2}}\!-\!\bx^*\|^2\!-\!\frac{1}{n}\sum_{i = 1}^n\E\|\bx_i^{(r\!+\!1)\!K\!-\!\frac{1}{2}}\!-\!\bbx^{(r+1)K}\|^2\\
		&\stackrel{\eqref{eq:localprogress_g}}{\le}\frac{1}{n}\sum_{i = 1}^n\E\Big[\|\bx_i^{rK}-\bx^*\|^2-\eta\sum_{t=rK}^{(r+1)K-1}(f_i(\bx_i^t)-f_i(\bx^*))\Big]-\frac{1}{n}\sum_{i = 1}^n\E\|\bx_i^{(r+1)K-\frac{1}{2}}-\bbx^{(r+1)K}\|^2\\
		&\stackrel{\eqref{eq:localgap}}{\le}\E\|\bbx^{rK}-\bx^*\|^2-\frac{\eta}{n}\sum_{i = 1}^n\E[f_i(\bbx^{rK})-f_i(\bx^*)]-\eta\sum_{i = 1}^n\sum_{t=rK+1}^{(r+1)K-2}c\E[f(\bbx^t)-f(\bx^*))]\\
		&-\frac{\eta}{n}\sum_{i = 1}^n\E[f_i(\bx_i^{(r+1)K-1})-f_i(\bx^*)]-\frac{1}{n}\sum_{i = 1}^n\E\|\bx_i^{(r+1)K-\frac{1}{2}}-\bbx^{(r+1)K}\|^2\\
		&=\!\E\|\bbx^{rK}\!-\!\bx^*\|^2\!-\!\eta\!\!\!\!\sum_{t = rK}^{(r+1)K-2}\!\!\!\!w_t\E[f(\bbx^t)-f^*]\!-\!\frac{\eta}{n}\sum_{i = 1}^n\E[f_i(\bx_i^{(r+1)K-1})\!-\!f_i(\bx^*)]\!\\
		&-\!\underbrace{\frac{1}{n}\sum_{i = 1}^n\E\|\bx_i^{(r+1)K-\frac{1}{2}}\!-\!\bbx^{(r+1)K}\|^2}_{T_1}\!.
	\end{align*}
	Since $T_1\ge 0$, we can bound $T_1$ as
	\begin{align*}
		&T_1=\frac{1}{n}\sum_{i = 1}^n\E\|\bx_i^{(r+1)K-1}-\bbx^{(r+1)K-1}-\eta\nabla f_i(\bx_i^{(r+1)K-1},\xi_i^{(r+1)K-1})+\eta \bbg^{(r+1)K-1}\|^2\\
		&=\frac{1}{n}\sum_{i = 1}^n\E\|\bx_i^{(r+1)K-1}-\bbx^{(r+1)K-1}\|^2+\eta^2\frac{1}{n}\sum_{i = 1}^n\E\|f_i(\bx_i^{(r+1)K-1},\xi_i^{(r+1)K-1})+ \bbg^{(r+1)K-1}\|^2\\
		&\!-\!\!2\eta\!\frac{1}{n}\!\sum_{i = 1}^n\!\E[\langle\bx_i^{(r\!+\!1)\!K\!-\!1}\!\!\!-\!\bbx^{(r\!+\!1)\!K\!-\!1}\!,\!\nabla\! f_i\!(\bx_i^{(r\!+\!1)\!K\!-\!1}\!)\!-\!\bbg^{(r\!+\!1)\!K\!-\!1}\rangle]\\
		&\ge\frac{1}{n}\sum_{i = 1}^n\E\|\bx_i^{(r+1)K-1}-\bbx^{(r+1)K-1}\|^2-2\eta\frac{1}{n}\sum_{i = 1}^n\E\Big[f_i(\bbx^{(r+1)K-1})-f_i(\bx_i^{(r+1)K-1})\\
		&+\frac{L}{2}\|\bx_i^{(r+1)K-1}-\bbx^{(r+1)K-1}\|^2\Big]\\
		&\stackrel{(\eta\le\frac{1}{2L})}{\ge}2\eta\frac{1}{n}\sum_{i = 1}^n\E[f_i(\bbx^{(r+1)K-1})-f_i(\bx_i^{(r+1)K-1})]\\
		&= 2\eta\E[f(\bbx^{(r+1)K-1})]-\frac{2\eta}{n}\sum_{i = 1}^n\E[f_i(\bx_i^{(r+1)K-1})].
	\end{align*}
	Therefore, $$T_1\!\ge\! \frac{T_1}{2}\!\ge\! \eta\E[f(\bbx^{(r+1)K-1})]\!-\!\frac{\eta}{n}\!\sum_{i = 1}^n\!\E[f_i(\bx_i^{(r+1)K-1})].$$ Substituting back we get
	\begin{align*}
		&\E\|\bbx^{(r+1)K}-\bx^*\|^2\\
		&\le \E\|\bbx^{rK}-\bx^*\|^2-\eta \sum_{t = rK}^{(r+1)K-2}w_t\E[f(\bbx^t)-f^*]-\frac{\eta}{n}\sum_{i = 1}^n\E[f_i(\bx_i^{(r+1)K-1})-f_i(\bx^*)]\\
		&- \Big(\eta\E[f(\bbx^{(r+1)K-1})]-\frac{\eta}{n}\sum_{i = 1}^n\E[f_i(\bx_i^{(r+1)K-1})]\Big)\\
		&=\E\|\bbx^{rK}-\bx^*\|^2-\!\eta\!\! \sum_{t = rK}^{(r\!+\!1)\!K\!-\!2}w_t\E[f(\bbx^t)\!-\!f^*]\!-\!\eta\E[f(\bbx^{(r+1)K-1})-f(\bx^*)]\\
		&=\E\|\bbx^{rK}-\bx^*\|^2-\eta \sum_{t = rK}^{(r+1)K-1}w_t\E[f(\bbx^t)-f^*].
	\end{align*}
\end{proof}

To complete the proof of Theorem \ref{theo:convex}, using Lemma \ref{lem:oneround_convex}, we can write
\begin{align*}
	&\E\|\bbx^{(T)} -\bx^*\|^2=\E\|\bbx^{RK} -\bx^*\|^2\\
	\le& \E\|\bbx^{(R-1)K} -\bx^*\|^2-\eta \sum_{t = (R-1)K}^{RK-1}w_t\E[f(\bbx^t)-f^*]\\
	\le&\cdots\le\|\bx^{(0)} -\bx^*\|^2-\eta \sum_{t = 0}^{T-1}w_t\E[f(\bbx^t)-f^*].
\end{align*}
Therefore, we can write
\begin{align*}
	\frac{1}{W}\sum_{t = 0}^{T-1}w_t\E[f(\bbx^t)-f^*]\le& \frac{\|\bx^{(0)} -\bx^*\|^2}{\eta W}= \frac{K\|\bx^{(0)} -\bx^*\|^2}{\eta(cKT+2(1-c)T)}.
\end{align*}
Theorem \ref{theo:convex} now follows from Jensen's inequality.

\subsection{Proof of Theorem \ref{theo:non_convex}}
Define $V_t\coloneqq \frac{1}{n}\E\sum_{i=1}^n\|\xx_i^{(t)}-\bbx^{(t)}\|^2$ to be the expected consensus error and $e_t\coloneqq \E f(\bbx^{(t)})-f(\bx^*)$ to be the expected optimality gap. Moreover, let $h_t \coloneqq \E\|\nabla f(\bbx ^{(t)})\|^2$ be the expected gradient norm of the average iterate.

We first establish the following descent lemma to bound the progress of $\bbx^t$ in one iteration:
\begin{lemma}
	Let Assumption \ref{assum:general}, Assumption \ref{assum:SGC} {(SGC)} hold. If we follow Algorithm \ref{alg:1} with stepsize $\eta \le \frac{1}{3KL\rho}$ and $K\ge 2$, we have
	\begin{align}\label{eq:et}
		e_{t+1}\le e_t-\frac{1}{3}\eta h_t+\frac{2}{3}\eta L^2V_t
	\end{align}
\end{lemma}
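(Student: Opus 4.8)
The plan is to track the dynamics of the \emph{average} iterate $\bbx^{(t)}$ and reduce one step of Algorithm~\ref{alg1} to a single stochastic gradient step. The crucial observation is that the aggregation step preserves the mean: whether or not $t+1\in\I$, one has $\bbx^{(t+1)} = \frac1n\sum_{i=1}^n \bx_i^{(t+\frac12)} = \bbx^{(t)} - \eta\bbg^{(t)}$, where $\bbg^{(t)}$ is the averaged stochastic gradient defined at the start of this section. Thus the consensus/communication structure enters only through the consensus error $V_t$, and the per-step progress of $\bbx^{(t)}$ looks like ordinary SGD driven by $\bbg^{(t)}$.

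First I would apply $L$-smoothness of $f$ (which is $L$-smooth as an average of the $L$-smooth $f_i$) to $\bbx^{(t+1)}=\bbx^{(t)}-\eta\bbg^{(t)}$, giving
\[
f(\bbx^{(t+1)})\le f(\bbx^{(t)}) - \eta\langle\nabla f(\bbx^{(t)}),\bbg^{(t)}\rangle + \tfrac{L\eta^2}{2}\|\bbg^{(t)}\|^2.
\]
Taking expectations and using unbiasedness, the inner product becomes $\langle\nabla f(\bbx^{(t)}),\frac1n\sum_i\nabla f_i(\bx_i^{(t)})\rangle$. I would then expand it with the polarization identity $\langle a,b\rangle=\tfrac12(\|a\|^2+\|b\|^2-\|a-b\|^2)$, which produces $\tfrac12 h_t$, a nonnegative term $\tfrac12\|\frac1n\sum_i\nabla f_i(\bx_i^{(t)})\|^2$ that I will discard, and the deviation $\tfrac12\|\nabla f(\bbx^{(t)})-\frac1n\sum_i\nabla f_i(\bx_i^{(t)})\|^2$. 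Writing $\nabla f(\bbx^{(t)})=\frac1n\sum_i\nabla f_i(\bbx^{(t)})$ and invoking $L$-smoothness of each $f_i$ together with Jensen's inequality bounds this deviation, in expectation, by $L^2 V_t$.

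Next I would control the second moment. Convexity of $\|\cdot\|^2$ and independence of the samples give $\E\|\bbg^{(t)}\|^2\le\frac1n\sum_i\E\|\nabla f_i(\bx_i^{(t)},\xi_i^{(t)})\|^2$, and Assumption~\ref{assum:SGC} bounds each summand by $\rho\|\nabla f(\bx_i^{(t)})\|^2$. Relating the gradient at each local point to that at the average via $\|\nabla f(\bx_i^{(t)})\|^2\le 2L^2\|\bx_i^{(t)}-\bbx^{(t)}\|^2+2\|\nabla f(\bbx^{(t)})\|^2$ then yields $\E\|\bbg^{(t)}\|^2\le 2\rho L^2 V_t+2\rho h_t$.

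Finally I would assemble the estimates and substitute $\eta=\frac{1}{3KL\rho}$, so that $L\eta\rho=\frac1{3K}\le\frac16$ whenever $K\ge2$. The coefficient of $h_t$ collapses to $-\frac\eta2+L\eta^2\rho=\eta(-\frac12+L\eta\rho)\le-\frac\eta3$, and the coefficient of $V_t$ to $\frac\eta2 L^2+L^3\eta^2\rho=\eta L^2(\frac12+L\eta\rho)\le\frac23\eta L^2$, which is exactly \eqref{eq:et}. The main obstacle is the bookkeeping: two separate $V_t$-type contributions (one from the cross-term deviation, one from the second-moment bound) and two $h_t$ contributions must be merged, and one must check that the threshold $K\ge2$ is precisely what pushes both final coefficients to $-\frac13$ and $\frac23$ while the discarded term $-\frac\eta2\|\frac1n\sum_i\nabla f_i(\bx_i^{(t)})\|^2$ remains nonpositive.
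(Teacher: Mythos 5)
Your proposal is correct and follows essentially the same route as the paper's proof: a smoothness descent step on the averaged iterate $\bbx^{(t+1)}=\bbx^{(t)}-\eta\bbg^{(t)}$, a lower bound of the inner-product term by $\tfrac{1}{2}h_t-\tfrac{L^2}{2}V_t$ (your polarization identity is the same manipulation as the paper's Young-type inequality $\langle a,b\rangle\ge-\tfrac{1}{2}\|a\|^2-\tfrac{1}{2}\|b\|^2$ applied after adding and subtracting $\nabla f_i(\bbx^{(t)})$), and an upper bound of the second moment by $2\rho h_t+2L^2\rho V_t$ via Jensen, SGC, and the gradient-splitting step, after which the stepsize condition $L\eta\rho=\tfrac{1}{3K}\le\tfrac{1}{6}$ for $K\ge 2$ yields exactly the coefficients $-\tfrac{1}{3}\eta$ and $\tfrac{2}{3}\eta L^2$. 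The only cosmetic difference is your (unneeded) mention of sample independence in the Jensen step, which the inequality does not actually require.
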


\begin{proof}
	\begin{align*}
		\E f(\bbx^{(t+1)})&=\E f(\bbx^{(t)}-\eta\frac{1}{n}\sum_{i = 1}^n\nabla f_i(\bx_i^{(t)},\xi_i^{(t)})\\
		&\le \E f(\bbx^{(t)})- \underbrace{\eta\E \langle\nabla f(\bbx^{(t)}),\frac{1}{n}\sum_{i = 1}^n\nabla f_i(\bx_i^{(t)},\xi_i^{(t)})\rangle}_{T_1}+\underbrace{\frac{L}{2}\eta^2\E \|\frac{1}{n}\sum_{i = 1}^n\nabla f_i(\bx_i^{(t)},\xi_i^{(t)})\|^2}_{T_2}.
	\end{align*}
	To bound the term $T_1$, we can write
	\begin{align*}
		&\E \langle\nabla f(\bbx^{(t)}),\frac{1}{n}\sum_{i = 1}^n\nabla f_i(\bx_i^{(t)},\xi_i^{(t)})\rangle\\
		=&\E \langle\nabla f(\bbx^{(t)}),\frac{1}{n}\sum_{i = 1}^n\nabla f_i(\bx_i^{(t)})\rangle\\
		=&\E \|\nabla f(\bbx^{(t)})\|^2+\E \langle\nabla f(\bbx^{(t)}),\frac{1}{n}\sum_{i=1}^n(\nabla f_i(\bx_i^{(t)})-\nabla f_i(\bbx^{(t)})\rangle\\
		\ge& \frac{1}{2}\E \|\nabla f(\bbx^{(t)})\|^2-\frac{1}{2n}\sum_{i = 1}^n\E\|\nabla f_i(\bx_i^{(t)})-\nabla f_i(\bbx^{(t)})\|^2\\
		\ge&\frac{1}{2}\E \|\nabla f(\bbx^{(t)})\|^2-\frac{L^2}{2n}\sum_{i = 1}^n\E\|\bbx^{(t)}-\bx_i^{(t)}\|^2\\
		=&\frac{1}{2}h_t-\frac{L^2}{2}V_t,
	\end{align*}
	where in the third inequality we have used $\langle a,b\rangle\ge-\frac{1}{2}\|a\|^2-\frac{1}{2}\|b\|^2$. Next, in order to bound  $T_2$, we have
	\begin{align*}
		&\E\|\frac{1}{n}\sum_{i = 1}^n\nabla f_i(\bx_i^{t},\xi_i^{t})\|^2\le \frac{1}{n}\E\sum_{i = 1}^n\|\nabla f_i(\bx_i^{t},\xi_i^{t})\|^2\\
		\stackrel{\eqref{eq:SGC}}{\le}&\frac{\rho}{n}\E\sum_{i = 1}^n\|\nabla f(\bx_i^{t})\|^2\\
		\le& \frac{2\rho}{n}\E\sum_{i = 1}^n\|\nabla f(\bbx^{t})\|^2+\frac{2\rho}{n}\E\sum_{i = 1}^n\|\nabla f(\bx_i^{t})-\nabla f(\bbx^{t})\|^2\\
		\le &2\rho h_t+2L^2\rho V_t.
	\end{align*}
	Putting everything together and subtracting $f(\bx^*)$ from both sides of the resulting inequality, we get
	\begin{align*}
		e_{t+1}&\le e_t-\eta(\frac{1}{2}h_t-\frac{L^2}{2}V_t)+\frac{L\eta^2}{2}(2\rho h_t+2L^2\rho V_t)\\
		&\stackrel{(\eta\le \frac{1}{6L\rho })}{\le} e_t-\frac{1}{3}\eta h_t+\frac{2}{3}\eta L^2V_t.
	\end{align*}
\end{proof}

Next, we bound the consensus error $V_t$ using the following lemma:
\begin{lemma}
	Let Assumption \ref{assum:general}, Assumption \ref{assum:SGC} {(SGC)} hold. Assume the nodes follow Algorithm \ref{alg:1} with stepsize $\eta \le \frac{1}{3KL\rho}$, and define $\tau(t) \coloneqq \max_{s\le t, s\in \I} s$. Then,
	\begin{align}\label{eq:Vt}
		V_t\le 3\eta^2K\rho \sum_{j=\tau(t)}^{t-1}h_j.
	\end{align}
\end{lemma}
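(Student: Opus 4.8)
The plan is to exploit the fact that between two consecutive communication times the local iterates evolve completely independently, so the entire consensus error $V_t$ is generated within the current round and can be traced back to the last synchronization point $\tau=\tau(t)$, where all iterates coincide. First I would observe that at time $\tau$ the averaging step (or the common initialization when $\tau=0$) forces $\bx_i^{(\tau)}=\bbx^{(\tau)}$ for every $i$, hence $V_\tau=0$. Since no averaging occurs for $\tau<t'\le t$ when $t$ lies strictly between communication times, unrolling the pure local updates gives
\begin{align*}
\bx_i^{(t)}-\bbx^{(t)}=-\eta\sum_{j=\tau}^{t-1}\big(\nabla f_i(\bx_i^{(j)},\xi_i^{(j)})-\bbg^{(j)}\big),
\end{align*}
so $\bx_i^{(t)}-\bbx^{(t)}$ is exactly the deviation of $-\eta\sum_j\nabla f_i(\bx_i^{(j)},\xi_i^{(j)})$ from its average over $i$; for $t\in\I$ the sum is empty and $V_t=0$, matching the claimed bound trivially.

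Next I would bound the squared consensus error. Applying \eqref{eq:averageieq} to drop the mean, then the Cauchy--Schwarz estimate $\|\sum_{j=\tau}^{t-1}a_j\|^2\le(t-\tau)\sum_{j=\tau}^{t-1}\|a_j\|^2$ together with $t-\tau\le K$, I obtain
\begin{align*}
\sum_{i=1}^n\|\bx_i^{(t)}-\bbx^{(t)}\|^2\le\eta^2K\sum_{i=1}^n\sum_{j=\tau}^{t-1}\|\nabla f_i(\bx_i^{(j)},\xi_i^{(j)})\|^2.
\end{align*}
Taking expectations and conditioning on the past (so that SGC \eqref{eq:SGC} applies at the random point $\bx_i^{(j)}$), then using $L$-smoothness via $\|\nabla f(\bx_i^{(j)})\|^2\le 2\|\nabla f(\bbx^{(j)})\|^2+2L^2\|\bx_i^{(j)}-\bbx^{(j)}\|^2$ and averaging over $i$, I would arrive at the recursive estimate
\begin{align*}
V_t\le 2\eta^2K\rho\sum_{j=\tau}^{t-1}\big(h_j+L^2V_j\big).
\end{align*}

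The main obstacle is that this bound is \emph{self-referential}: $V_t$ is controlled by the earlier consensus errors $V_j$ within the same round rather than purely by the gradient norms $h_j$. To eliminate the $V_j$ terms I would induct on $t$ within the round, with base case $V_\tau=0$. Assuming $V_j\le 3\eta^2K\rho\sum_{l=\tau}^{j-1}h_l$ for all $j<t$, interchanging the order of summation and using that each round has at most $K$ steps yields $\sum_{j=\tau}^{t-1}V_j\le 3\eta^2K^2\rho\sum_{j=\tau}^{t-1}h_j$. Substituting back gives
\begin{align*}
V_t\le 2\eta^2K\rho\,(1+3\eta^2K^2L^2\rho)\sum_{j=\tau}^{t-1}h_j.
\end{align*}
The specific stepsize $\eta=1/(3KL\rho)$ then makes $3\eta^2K^2L^2\rho=1/(3\rho)\le 1/3$ (recall that SGC forces $\rho\ge 1$), so the prefactor is at most $2\cdot\frac{4}{3}=\frac{8}{3}\le 3$, closing the induction and delivering \eqref{eq:Vt}. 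The only delicate points are applying SGC correctly at the random iterates (handled by conditioning on the natural filtration) and verifying that the prescribed stepsize leaves enough slack to absorb the feedback term $L^2V_j$; everything else is bookkeeping of the sums over one communication round.
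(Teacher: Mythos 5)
Your proposal is correct and follows essentially the same route as the paper: the same unrolling of the local updates from the last synchronization point where $\bx_i^{(\tau)}=\bbx^{(\tau)}$, dropping the mean via \eqref{eq:averageieq}, Cauchy--Schwarz with $t-\tau(t)\le K$, SGC, and the smoothness split, arriving at the identical recursive estimate $V_t\le 2\eta^2K\rho\sum_{j=\tau(t)}^{t-1}(h_j+L^2V_j)$. The only difference is how this self-referential bound is closed: you run a strong induction within the round using the target bound as hypothesis (prefactor $2(1+3\eta^2K^2L^2\rho)\le 8/3\le 3$), whereas the paper iteratively unrolls the $V_j$ terms to obtain a factor $(1+\tfrac{1}{4K\rho})^K\le 3/2$; both closings are valid and both use $\rho\ge 1$.
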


\begin{proof}
	\begin{align*}
		&nV_t = \E\sum_{i = 1}^n\|\xx_i^{t}-\bbx^{t}\|^2\\
		&=\sum_{i = 1}^n\E\|(\xx_i^{\tau(t)}\!\!-\!\!\sum_{j=\tau(t)}^{t-1}\eta\nabla f_i(\bx_i^{j},\xi_i^{j}))-(\bbx^{\tau(t)}\!\!-\!\!\sum_{j=\tau(t)}^{t-1}\eta\bbg^{j})\|^2\\
		&=\sum_{i = 1}^n\E\|-\sum_{j=\tau(t)}^{t-1}\eta\nabla f_i(\bx_i^{j},\xi_i^{j})+\sum_{j=\tau(t)}^{t-1}\eta\bbg^{j}\|^2\\
		&\stackrel{\eqref{eq:averageieq}}{\le}\sum_{i = 1}^n\E\|\sum_{j=\tau(t)}^{t-1}\eta\nabla f_i(\bx_i^{j},\xi_i^{j})\|^2\\
		&\le \eta^2(t-\tau(t)) \sum_{i = 1}^n\sum_{j=\tau(t) }^{t-1}\E\|\nabla f_i(\bx_i^{j},\xi_i^{j})\|^2\\
		&\stackrel{\eqref{eq:SGC}}{\le}\eta^2(t-\tau(t))\rho\sum_{i = 1}^n\sum_{j=\tau(t) }^{t-1}\E\|\nabla f(\bx_i^{j})\|^2\\
		&\le 2\eta^2(t-\tau(t))\rho\sum_{i = 1}^n\sum_{j=\tau(t) }^{t-1}\E\Big[ \|\nabla f(\bbx^{j})-\nabla f(\bx_i^j)\|^2+\|\nabla f(\bbx^{j})\|^2\Big]\\
		&\le 2n\eta^2K\rho\sum_{j=\tau(t) }^{t-1}h_j+2n\eta^2K\rho L^2\sum_{j=\tau(t) }^{t-1}V_j.
	\end{align*}
	Since $\eta \le \frac{1}{3KL\rho}$, we have 
	\begin{align*}
		V_t\le 2\eta^2K\rho\sum_{j=\tau(t) }^{t-1}h_j+\frac{1}{4K\rho}\sum_{j=\tau(t) }^{t-1}V_j.
	\end{align*}
	Unrolling all $V_j, j=\tau(t),\ldots,t-1$, and noting that $\rho\ge 1$, we have 
	\begin{align*}
		V_t&\le \frac{1}{4K\rho}\sum_{j=\tau(t) }^{t-1}V_j+2\eta^2K\rho\sum_{j=\tau(t) }^{t-1}h_j\\
		&\le \frac{1}{4K\rho}\sum_{j=\tau(t) }^{t-2}V_j2\eta^2K\rho\sum_{j=\tau(t) }^{t-1}h_j+\frac{1}{4K\rho}(\frac{1}{4K\rho}\sum_{j=\tau(t) }^{t-2}V_j+2\eta^2K\rho\sum_{j=\tau(t) }^{t-2}h_j)\\
		&\le \cdots\le (1+\frac{1}{4K\rho})^K2\eta^2K\rho\sum_{j=\tau(t) }^{t-1}h_j\\
		&\le 3\eta^2K\rho \sum_{j=\tau(t)}^{t-1}h_j.
	\end{align*}
\end{proof}

To complete the proof of Theorem \ref{theo:non_convex}, we combine \eqref{eq:et} and \eqref{eq:Vt} by applying a telescoping sum on \eqref{eq:et} to get
\begin{align*}
	\frac{1}{3}\eta\sum_{t=0}^{T-1}h_t&\le e_0+\frac{2}{3}\eta L^2 \sum_{t = 0}^{T-1}V_t\\
	&\stackrel{\eqref{eq:Vt}}{\le}e_0+2\eta L^2 \sum_{t = 0}^{T-1}\eta^2 K\rho\sum_{j=\tau(t)}^{t-1}h_j\\
	&=e_0+2\eta^3L^2K\rho\sum_{j=0}^{T-2}h_j\sum_{t=j+1}^{\tau(j)+K}1\\
	&\le e_0 +2\eta^3L^2K^2\rho\sum_{t=0}^{T-2}h_t\\
	&\stackrel{(\eta\le\frac{1}{3KL\rho })}{\le}e_0+\frac{2}{9\rho}\eta\sum_{t=0}^{T-1}h_t\\
	&\le e_0+\frac{2}{9}\eta\sum_{t=0}^{T-1}h_t.
\end{align*}
Therefore, we have
\begin{align*}
	\frac{1}{T}\sum_{t=0}^{T-1}h_t\le \frac{9e_0}{\eta T}\ \Rightarrow \ \min_{0\le t\le T-1}h_t\le \frac{9 e_0}{\eta T}.
\end{align*}
This completes the proof for Theorem \ref{theo:non_convex}.

\section{Experimental Verification of Assumption \ref{assum:SGC} (SGC)}
\label{sec:verification}
{
Here, we perform experimental verification of Assumption \ref{assum:SGC} (SGC) with the same problem setup as in Section \ref{subsec:cifar}. For verification purposes, we use centralized SGD to train the aforementioned ResNet18 neural network on the Cifar10 dataset for a total of 1000 epochs, and we plot the global gradient norm and the maximum of per-sample gradient norm of the model as well as the ratio between the two gradient norms in Figure \ref{fig:gradnorm}.
}
{
\begin{figure}[h]
	\centering
	\includegraphics[width=0.5\linewidth]{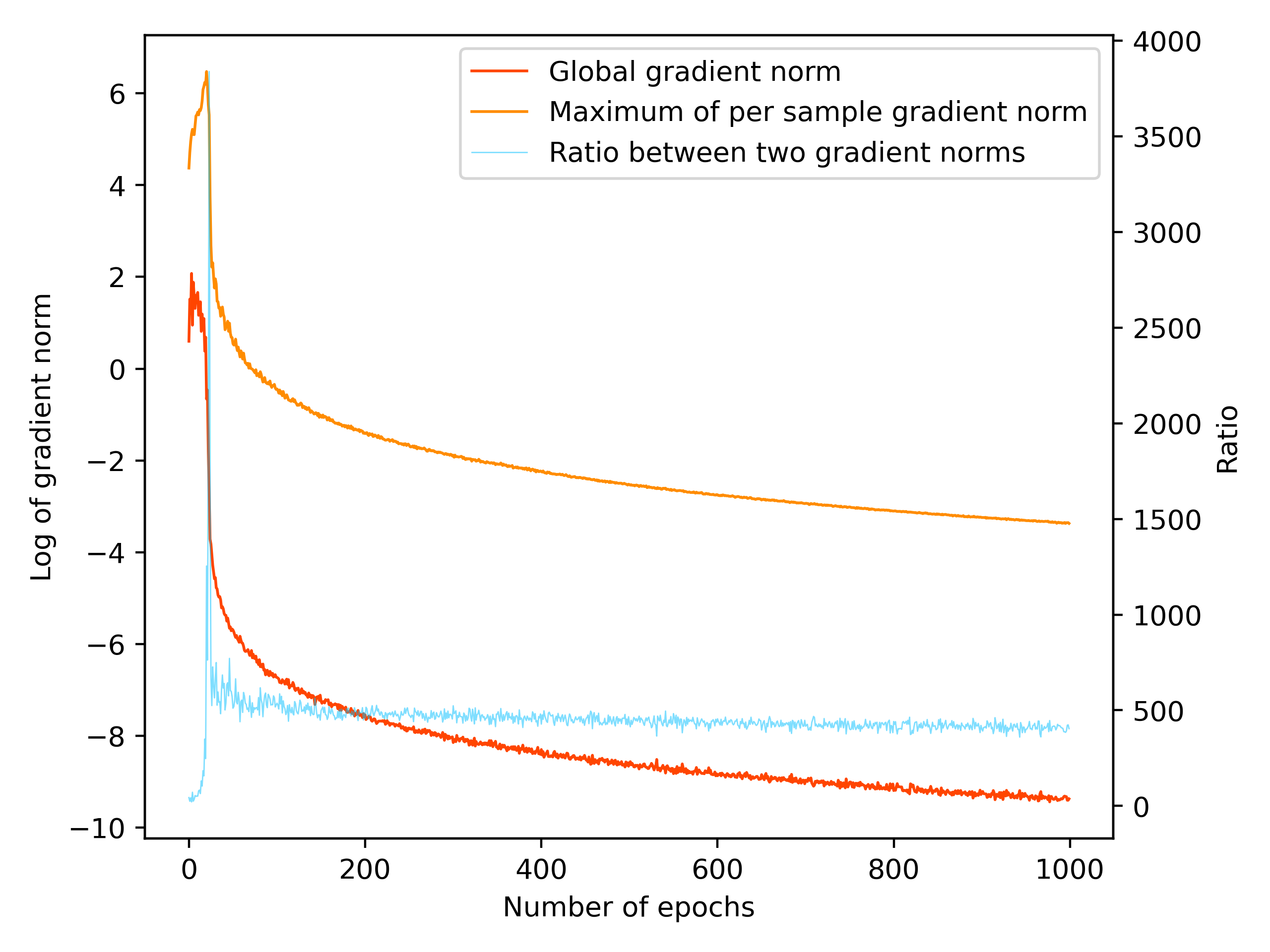}
	\caption{The global gradient norm, maximum of per-sample gradient norm and the ratio between them vs. number of epochs. The red line is the global gradient norm and the orange line is the maximum of per-sample gradient norm, they are plotted in log scale whose values correspond tothe y axis on the left. The blue line is the ratio between the two norms whose value correspond tothe y axis on the right.}
	\label{fig:gradnorm}
\end{figure}

It is shown in Figure \ref{fig:gradnorm} that throughout the training, the ratio $r =\frac{\max_{i,\xi_i}\|\nabla f_i(\bx,\xi_i)\|}{\|\nabla f(\bx)\|}$ never exceeds $4000$, and after a transient phase quickly stablizes aroung $500$. Since Assumption \ref{assum:SGC} requires that  
\begin{align*}
	\E_{\xi_i \sim \cD_i}\|\nabla f_i(\xx,\xi_i)\|^2 \le \rho \|\nabla f(\xx)\|^2, \ \forall i,
\end{align*}
while we always have 
\begin{align*}
	\E_{\xi_i \sim \cD_i}\|\nabla f_i(\xx,\xi_i)\|^2 \le \max_{\xi_i}\|\nabla f_i(\bx,\xi_i)\|^2
\end{align*}
and in most cases 
\begin{align*}
	\E_{\xi_i \sim \cD_i}\|\nabla f_i(\xx,\xi_i)\|^2 <\!\!< \max_{\xi_i}\|\nabla f_i(\bx,\xi_i)\|^2,
\end{align*}
we have verified that Assumption \ref{assum:SGC} (SGC) is a valid assumption for over-parameterized models in practice.
}

\section{$\O(\exp(-T))$ Convergence for Strongly Convex Loss Functions}
For strongly convex loss functions, an error bound of $\O(\exp(-T))$ can be achieved under the over-parameterized setting, where $T$ is the total number of iterations. Before our work, the best-known convergence rate was $\O(\exp(-T/K))$ \citep{qu2020federated,koloskova2020unified}.

\begin{theorem}[Strongly convex functions]\label{theo:strongly_convex}
	{Let Assumption \ref{assum:general}, Assumption \ref{assum:convex} and Assumption \ref{assum:interpolation} {(Interpolation)} hold with $\mu >0$. If we follow Algorithm \ref{alg:1} with stepsize $\eta \le {1}/{L}$, we will have
		\begin{align*}
			\E\|\bbx^{(T)} -\bx^*\|^2\le \big(1-\eta\mu\big)^T\|\bx^{(0)} -\bx^*\|^2,
		\end{align*}
		where $\bbx^{(t)} \coloneqq \frac{1}{n}\sum_{i = 1}^n \bx_i^{(t)} $ is the average of all nodes’ iterates at time step $t$. As a special case, if we choose $\eta = {1}/{L}$, then 
		\begin{align}\label{eq:ratestronglyconvex}
			\E\|\bbx^{(T)} -\bx^*\|^2\le \big(1-\frac{\mu}{L}\big)^T\|\bx^{(0)} -\bx^*\|^2.
		\end{align}
	}
\end{theorem}

It was shown in \citet{qu2020federated,koloskova2020unified} that Local SGD achieves a geometric convergence rate for strongly convex loss functions in the over-parameterized setting. However, both \citet{qu2020federated} and \citet{koloskova2020unified} give an $\O(\exp(-{T}/{K}))$ convergence rate, while our convergence rate is $\O(\exp(-T))$. The difference between these two rates is significant because the former rate implies that local steps do not contribute to the error bound (since the convergence rate essentially depends on the number of communication rounds $R={T}/{K}$). In contrast, the latter rate suggests local steps can drive the iterates to the optimal solution exponentially fast. The difference between the rates in~\citet{qu2020federated,koloskova2020unified} and Theorem~\ref{theo:strongly_convex} can be explained by the fact that~\citet{qu2020federated,koloskova2020unified} use a smaller stepsize of $\eta = \O(\frac{1}{KL})$, while our analysis allows a larger stepsize of $\eta = {1}/{L}$.

\subsection{Proof of Theorem \ref{theo:strongly_convex}}
Let $\bx^* \in \argmin_{x\in \R^d } f(\bx)$. From Assumptions \ref{assum:interpolation} {(Interpolation)} and \ref{assum:convex}, we have $\nabla f_i(\xx^\star,\xi_i)=0$, which implies $\bx^* \in \argmin_{x\in \R^d } f_i(\bx,\xi_i), \ \forall i\in [n],\ \xi_i\in \Omega_i$.

We first bound the progress made by local variables $\bx_i$ in one local SGD update as follows:
\begin{lemma}\label{lem:localprogress_sg}
	Let Assumption \ref{assum:general}, Assumption \ref{assum:convex} and Assumption \ref{assum:interpolation} {(Interpolation)} hold with $\mu >0$. If we follow Algorithm \ref{alg:1} with stepsize $\eta \le \frac{1}{L}$, we will have
	\begin{align}\label{eq:localprogress_sg}
		\E_{\xi_i^t}\|\bx_i^{t+\frac{1}{2}}-\bx^*\|^2\le (1-\eta\mu)\|\bx_i^{t}-\bx^*\|^2
	\end{align}
\end{lemma}
\begin{proof}
	\begin{align*}
		&\ \quad \E_{\xi_i^t}\|\bx_i^{t+\frac{1}{2}}-\bx^*\|^2\\
		&=\E_{\xi_i^t}\|\bx_i^{t}-\bx^*-\eta \nabla f_i(\bx_i^{t},\xi_i^t)\|^2\\
		&=\|\bx_i^{t}\!-\!\bx^*\|^2-2\eta\langle\bx_i^t\!-\!\bx^*,\nabla f_i(\bx_i^t)\rangle+\eta^2 \E_{\xi_i^t}\|\nabla f_i(\bx_i^t,\xi_i^t)\|^2\\
		&\stackrel{\eqref{eq:convexity}\eqref{eq:gradientnorm_optimalitygap}}{\le}\|\bx_i^{t}-\bx^*\|^2-2\eta(f_i(\bx_i)-f_i(\bx^*)+\frac{\mu }{2}\|\bx_i^{t}-\bx^*\|^2)\\
		&+\eta^2\E_{\xi_i^t}[2L(f_i(\bx_i^t,\xi_i^t)-f_i(\bx^*,\xi_i^t))]\\
		&=(1-\eta\mu)\|\bx_i^{t}-\bx^*\|^2-(2\eta-2L\eta^2)(f_i(\bx_i)-f_i(\bx^*))\\
		&\stackrel{(\eta\le\frac{1}{L})}{=}(1-\eta\mu)\|\bx_i^{t}-\bx^*\|^2.
	\end{align*}
\end{proof}
Using Lemma \ref{lem:localprogress_sg} and Proposition \ref{prop:meandistance}, we can bound the progress made by $\bbx$ in one communication round as follows:
\begin{lemma}
	Let Assumption \ref{assum:general}, Assumption \ref{assum:convex} and Assumption \ref{assum:interpolation} {(Interpolation)} hold with $\mu >0$. If we follow Algorithm \ref{alg:1} with stepsize $\eta \le \frac{1}{L}$, we will have
	\begin{align*}
		\E\|\bbx^{(r+1)K}-\bx^*\|^2\le (1-\eta\mu)^K\E\|\bbx^{rK}-\bx^*\|^2,
	\end{align*}
	for $r=0,1,\ldots,R-1.$
\end{lemma}
\begin{proof}
	\begin{align*}
		\E\|\bbx^{(r+1)K}-\bx^*\|^2&=\E\|\frac{1}{n}\sum_{i = 1}^n\bx_i^{(r+1)K-\frac{1}{2}}-\bx^*\|^2\\
		&\stackrel{\eqref{eq:meandistance}}{\le}\frac{1}{n}\sum_{i = 1}^n\E\|\bx_i^{(r+1)K-\frac{1}{2}}-\bx^*\|^2\\
		&\stackrel{\eqref{eq:localprogress_sg}}{\le}\frac{1}{n}\sum_{i = 1}^n(1-\eta\mu)^K\E\|\bx_i^{rK}-\bx^*\|^2\\
		&=(1-\eta\mu)^K\E\|\bbx^{rK}-\bx^*\|^2.
	\end{align*}
\end{proof}
The proof of Theorem \ref{theo:strongly_convex} now follows by simply noting that
\begin{align*}
	\E\|\bbx^{(T)} -\bx^*\|^2&=\E\|\bbx^{RK} -\bx^*\|^2\\
	&\le (1-\eta\mu)^K\E\|\bbx^{(R-1)K} -\bx^*\|^2\\
	&\le\cdots\le (1-\eta\mu)^T\|\bx^{(0)} -\bx^*\|^2
\end{align*}

\subsection{Perceptron for Linearly Separable Dataset, Strongly Convex Case}

To evaluate the performance of Local SGD for the over-parameterized model with strongly-convex loss functions, we adopt the same experimental setup as in Section \ref{sec:perceptron} but add a correction term to the squared-hinge loss to make it strongly convex and run another experiment on the pathologically partitioned dataset. The result is shown in Figure~\ref{fig:4}. The $\O(\exp(-KR)) = \O(\exp(-T))$ convergence rate can be observed from the figure, validating our result in Theorem \ref{theo:strongly_convex}.

\begin{figure}[]
	\centering
	\includegraphics[width=0.5\linewidth]{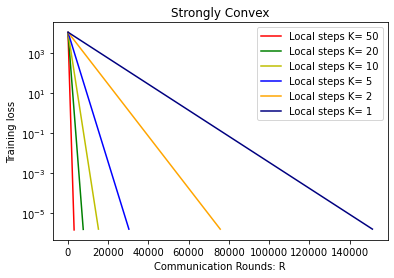}\vspace{-0.2cm}
	\caption{Local SGD for the over-parameterized model with strongly-convex loss functions. Training loss vs. communication rounds with different local steps under the pathological data partition regimes. Training loss is in log scale.}
	\label{fig:4}
\end{figure}

\section{Discussion on the proof in \citet{zhang2021distributed}}
\label{subsec:issue}
In Section 9.3.2. Discussion on Theorem 3 of the paper \citet{zhang2021distributed}, the authors stated that $M_n\ge \min_i\frac{T_i}{L_i^2}$, which is essential to their result in the Discussion, which can be interpreted as an $\O(\frac{1}{\sqrt{T}})$ convergence rate. However, this inequality does not hold. A simple counterexample is when one of the local nodes finds the optimal point after the first local step, in which case $h_{i,n}(0) = 1$ and $h_{i,n}(t)=0$ for all $t=1,2,\ldots T_i-1$, and $M_n = \min_{i}\alpha_i\sum_{t=0}^{T_i-1}h_{i,n}(t) \le \alpha_i = \frac{1}{L_i^2}$. However, this contradicts the claimed inequality $M_n\ge \min_i\frac{T_i}{L_i^2}$.

\end{document}